
\documentclass[10pt,twocolumn,letterpaper]{article}

\usepackage{cvpr}              

\usepackage{graphicx}
\usepackage{amsmath}
\usepackage{amssymb}
\usepackage{booktabs}

%
\usepackage[pagebackref,breaklinks,colorlinks]{hyperref}

\usepackage{hyperref}
\usepackage{url}
\usepackage{amsthm}
\usepackage{multirow}
\usepackage{array}
\usepackage{subcaption}
\usepackage{float}
\usepackage{nicefrac}
\usepackage{wrapfig}
\usepackage[export]{adjustbox}
\newtheorem{lemma}{Lemma}

\usepackage[capitalize]{cleveref}
\crefname{section}{Sec.}{Secs.}
\Crefname{section}{Section}{Sections}
\Crefname{table}{Table}{Tables}
\crefname{table}{Tab.}{Tabs.}


\begin{document}

\title{Exploring Compositional Visual Generation with Latent Classifier Guidance}

\newcommand*\samethanks[1][\value{footnote}]{\footnotemark[#1]}
\author{
Changhao Shi$^1$\thanks{Work done during the internship at NEC Laboratories America.}
\quad Haomiao Ni$^2$\samethanks
\quad Kai Li$^4$\quad Shaobo Han$^4$\quad Mingfu Liang$^3$\samethanks \quad Martin Renqiang Min$^4$ \\
$^1$University of California, San Diego, CA, USA \\ 
$^2$The Pennsylvania State University, University Park, PA, USA \\
$^3$Northwestern University, Evanston, IL, USA \\
$^4$NEC Laboratories America, Princeton, NJ, USA \\
$^1${\tt\small cshi@ucsd.edu}
\quad$^2${\tt\small hfn5052@psu.edu}
\quad$^3${\tt\small mingfuliang20202@u.northwestern.edu}\\
\quad$^4${\tt\small\{kaili, shaobo, renqiang\}@nec-labs.com}
}
\maketitle

\begin{abstract}
Diffusion probabilistic models have achieved enormous success in the field of image generation and manipulation.
In this paper, we explore a novel paradigm of using the diffusion model and classifier guidance in the latent semantic space for compositional visual tasks. 
Specifically, we train latent diffusion models and auxiliary latent classifiers to facilitate non-linear navigation of latent representation generation for any pre-trained generative model with a semantic latent space. 
We demonstrate that such conditional generation achieved by latent classifier guidance provably maximizes a lower bound of the conditional log probability during training. 
To maintain the original semantics during manipulation, we introduce a new guidance term, which we show is crucial for achieving compositionality. 
With additional assumptions, we show that the non-linear manipulation reduces to a simple latent arithmetic approach.
We show that this paradigm based on latent classifier guidance is agnostic to pre-trained generative models, and present competitive results for both image generation and sequential manipulation of real and synthetic images. 
Our findings suggest that latent classifier guidance is a promising approach that merits further exploration, even in the presence of other strong competing methods.
\end{abstract}

\section{Introduction}
\label{sec:intro}

In recent years, the machine learning and computer vision communities have witnessed great progress in the field of deep generative modeling. From variational autoencoders (VAEs) \cite{kingma2013auto}, normalizing flows \cite{rezende2015variational}, and generative adversarial networks (GANs) \cite{goodfellow2014generative,brock2018large, choi2020stargan,abdal2020image2stylegan++,wu2021stylespace}, to the very recent diffusion probabilistic models~\cite{sohl2015deep,ho2020denoising, song2020denoising, nichol2021improved,abstreiter2021diffusion, rombach2022high} and score-based models \cite{song2019generative,song2020score}, generating high-quality, realistic images has become easier, if not impossible before. Despite the previous significant progress, controlling the generation process using various conditions, such as class labels and text descriptions, still remains challenging.

One major difficulty towards such controllable generation is compositionality.
Compositionality in generative modeling, or compositional generation, is the ability of a conditional generative model to produce realistic outputs given multiple conditions and their relations.
Broadly speaking, there exist two types of methods for achieving such compositionality.
The first class of methods tackles the problem directly in the image space, either relying on energy-based models (EBMs) or drawing inspiration from them \cite{du2020compositional,liu2022compositional}.
Such technique is referred as classifier-free guidance in the literature, in contrast to classifier guidance that relies on auxiliary image classifiers \cite{nichol2021improved}.
Unlike classifier guidance that is mainly used for controllable generation with a single condition, the classifier-free guidance is naturally composable and suitable for multiple conditions.
However, such methods can not leverage the nice properties of the latent space such as disentanglement \cite{bengio2013representation}. 
Also, training multiple image space sub-models, either EBMs or diffusion models, can be cumbersome, especially when the number of conditions grows.

The second class of methods focuses on the latent space of pre-trained generative models.
These methods aim to find a rule that governs the manipulation of latent codes so as to obtain outputs with desired properties.
When the latent space is disentangled as in StyleGAN \cite{karras2019style,karras2020analyzing}, linear control is possible by carefully identifying and combining the latent directions of each attribute \cite{shen2020interpreting,wu2021stylespace,harkonen2020ganspace,shen2021closed}.
While this is not new, the feasibility of such linear control in the context of compositionality is still under-explored.
On the other hand, non-linear manipulations of the latent space have also been proposed for finer control, in the sense that each modification will be customized for each latent code.
However, the previous non-linear methods are either not amendable for new attributes \cite{abdal2021styleflow} or not agnostic to various models and latent spaces 
 \cite{nie2021controllable}.
 Using diffusion models to control latent space through classifier-free guidance has been widely recognized \cite{rombach2022high}, but not within the context of compositionality.
 Using diffusion models and classifier guidance in latent space however, is still a missing piece.

In this paper, we aim to fill in this missing piece and answer the question: is the latent diffusion model with latent classifier guidance useful for compositional image generation and manipulation\footnote{For the sake of clarity, the term ``compositional generation" will refer to generating images conditioning only on attributes while the term ``compositional manipulation" will specifically refer to conditioning on both attributes and an original image.}?
We demonstrate that classifier guidance can help diffusion probabilistic models to manipulate latent spaces in a non-linear way, 
and this process can be further simplified to a linear version that resembles vector arithmetic-based manipulation with additional assumptions.
For compositional generation, we train latent diffusion models and auxiliary latent classifiers for pre-trained generators, and use classifier guidance to sample in the latent space. 
To facilitate the manipulation of synthetic or real images, we introduce an additional guidance term by framing the problem as incorporating a source image condition into the compositional generation process.
We demonstrate that employing latent classifier guidance with diffusion models maximizes a lower bound of the conditional log probability function, providing a provable approach for conditioning on multiple attributes. 
Our experiments validate the effectiveness of this technique, as it can generate realistic images with various attribute compositions and manipulate both \textbf{synthetic and real} images in a coherent manner.
We also find that the linear version of our proposed method based on vector arithmetic can serve as a strong baseline in many scenarios, 
despite previous studies focusing on non-linear manipulation.


\section{Related Work}
\label{sec:related_works}
\paragraph{(Conditional) diffusion models.}
Diffusion models have become increasingly favorable over other generative models such as GANs \cite{goodfellow2014generative} and VAEs \cite{kingma2013auto} due to their photo-realistic generation quality and ease of training.
\cite{sohl2015deep} proposed the first functional framework of diffusion models from the perspective of thermodynamics, then this framework was followed by \cite{song2019generative,song2020score,ho2020denoising,song2020denoising} whose works established the foundation of diffusion models that we see today.
For the conditional generation with diffusion models, \cite{dhariwal2021diffusion} further formulated classifier guidance and lifted the generation quality of diffusion models over previous state-of-the-art GANs.
\cite{ho2022classifier} then proposed classifier-free guidance which nowadays is used in many large-scale image generation engine \cite{nichol2021glide,ramesh2022hierarchical,saharia2022photorealistic}.
Although most of these diffusion models work on the image space, recently latent diffusion models have also drawn great attention and achieved remarkable results \cite{vahdat2021score,abstreiter2021diffusion,rombach2022high}.
\vspace{-10pt}
\paragraph{Compositionality in latent space.}
Due to the wide success of StyleGANs \cite{karras2019style,karras2020analyzing} in image generation, most efforts on conditional generation have been focusing on the latent space of StyleGANs.
To use linear arithmetic for manipulation, various ways of finding attribute directions have been proposed.
\cite{shen2020interpreting} found a direction for each attribute by training linear SVMs, then perturbed the latent point along the orthogonal projection of these directions to prevent unwanted semantic changes.
\cite{wu2021stylespace} detected latent channels that only allow local changes for specific attributes.
Such directions can also be identified in an unsupervised fashion, using the PCA decomposition of the latent space \cite{harkonen2020ganspace}, or the SVD of the first subsequent linear layer \cite{shen2021closed}.
On the non-linear side, \cite{abdal2021styleflow} designed a framework to control a set of pre-decided attributes using conditional normalizing flow.
\cite{nie2021controllable} used latent EBMs to control the style generation with non-linear classifiers.
Note that none of the linear methods explicitly discussed the composition of multiple attributes and their relations as in the non-linear methods.
\vspace{-10pt}
\paragraph{Compositionality in image space.}
Some other methods tackled compositional generation in the image space.
These methods either directly used EBMs or employed diffusion models that can be considered as EBMs.
\cite{du2020compositional} trained EBMs for each condition and composed them together by defining new energy functions based on each individual energy function and their relations.
\cite{liu2022compositional} followed their proposal and adopted classifier-free guidance in diffusion models to multiple conditions.
However, these image space models can not leverage the disentanglement property of the latent space, and training EBMs or diffusion models for each condition can be cumbersome.
Note that although some large-scale text-to-image generation engines \cite{nichol2021glide,ramesh2022hierarchical,saharia2022photorealistic} claim compositionality in their methods, they do not model compositionality explicitly but rather rely on implicit composition by their language models, which leads to less satisfying results when the set of conditions gets large \cite{liu2022compositional}.


\section{Methodology}
\label{sec:method}

In this section, we will describe how the latent diffusion model and classifier guidance can be used to generate and manipulate images in a principled way.

\subsection{Latent Diffusion Modeling}

A diffusion model is a type of deep latent variable model that approximates an unknown data distribution $p(x)$ through smooth, iterative denoising steps. It maps a pre-defined noise distribution to the data distribution using the following formula: $p_{\theta}(x_{0}) = \int p_{\theta}(x_{0:T}) d x_{1:T}$, where $x_{1:T}$ are the latent variables with the same dimensionality as the data $x_0$.
The forward diffusion process, resembling a parameter-free encoder, is a Markov chain $q(x_{1:T}|x_0) = \prod_{t=1}^T q(x_t|x_{t-1})$, where each $q(x_t|x_{t-1})$ is typically a Gaussian distribution.
The forward process perturbs inputs according to a pre-defined schedule, and the transformed data distribution $q(x_t|x_0)$ will gradually converge to a standard Gaussian $\mathcal{N}(x_T; \mathbf{0}, \mathbf{I})$.
The reverse sampling process, resembling a hierarchical decoder, is composed of a sequence of de-noising steps $p_{\theta}(x_{t-1}|x_t)$, which is parameterized by a deep neural network with parameter $\theta$.

During training, the input images are corrupted by the forward process, and the diffusion model is trained to 
reconstruct the original images from the corrupted inputs.
Specifically, for de-noising diffusion probabilistic models (DDPM) \cite{ho2020denoising}, the training objective is formulated as a re-weighted variational bound by treating DDPMs as VAEs, while for scored-based generative models \cite{song2019generative}, the objective is derived using score matching.
Once trained, to generate samples from the learned distribution, one first samples $x_T$ from a standard Gaussian and then uses the reverse process to transform it into the image space.

Here, we focus on leveraging the latent space of a pre-trained generative model. Specifically, we train a diffusion model to approximate the latent distribution $p(z)$ of a pre-trained generator $G$ that maps a latent space $\mathcal{Z}$ to the image space $\mathcal{X}$. Modeling the latent space has several advantages over modeling the image space. For instance, the latent space enjoys properties such as disentanglement \cite{bengio2013representation}, which can facilitate more controllable manipulations of the generated images. Additionally, using various guidance techniques in the latent space is often more feasible since training latent guidance terms is generally easier than training other manipulation methods in image space \cite{shen2020interpreting}. 

\subsection{Conditional and Compositional Generation}

Conditional generation with diffusion models relies on perturbing unconditional generation with user-specified guidance terms, namely classifier guidance \cite{sohl2015deep,song2020score,dhariwal2021diffusion} and classifier-free guidance \cite{ho2022classifier}.
Although classifier-free guidance performs competitively in image space and is sometimes more favorable than classifier guidance \cite{ho2022classifier,liu2022compositional}, we argue that using classifier guidance in latent diffusion models has its unique advantages.
Regarding the under-performance of classifier guidance in the image space, one popular suspicion is that image classifiers tend to learn shortcuts from suspicious correlations. 
For example, a deep neural network classifier on the attribute ``old'' can be misguided by ``white hair'' and ignore its holistic features.
This problem is alleviated in a compact, even disentangled latent space, if the semantic directions of `old' and `white hair' are orthogonal.
Also, deep image classifiers are typically vulnerable to adversarial attacks, while latent classifiers with much few parameters suffer less from this problem.
Another benefit is that classifiers are usually easier to train than diffusion models used in classifier-free guidance.
Finally, when the classifiers are linear, classifier guidance resembles linear arithmetic methods, as we will show in Section~\ref{Connection to Linear Arithmetic}.

The goal of conditional generation is to model the conditional distribution $p(z|y)$ where $y$ is the conditions or attributes.
By Bayes rules $p(z_t|y) = \nicefrac{p(z_t)p(y|z_t)}{p(y)}$, the score of the conditional probability $\nabla_{z_t} \log p(z_{t}|y)$ can be factorized as the unconditional score $\nabla_{z_t} \log p(z_t)$ and the gradient flow $\nabla_{z_t} \log p(y|z_t)$.
Therefore, one simply needs an unconditional latent diffusion model and a latent classifier to model the conditional score, known as classifier guidance.
In practice, the classifier guidance term is usually scaled by a factor $\alpha$, such that $\nabla_{z_t} \log p(z_t|y) = \nabla_{z_t} \log p(z_t) + \alpha \nabla_{z_t} \log p(y|z_t)$.
The factor $\alpha$ serves as a temperature parameter which adds another layer of controllability to the sharpness of the posterior distribution $p(y|z_t)$.

Compositional generation can be considered as conditional generation with multiple conditions and the relations among them.
In this paper, we consider two relations, conjunction ``AND'' and negation ``NOT''. 
For the conjunction of attributes $y^1 \wedge y^2 \wedge ... \wedge y^n$, assuming the conditions to be independent of each other, we can simply factorize the \textit{compositional} log probability as 
\begin{multline}
\label{eq:and}
    \nabla_{z_t} \log p(z_t|y^1,y^2,...,y^n) = \\
    \nabla_{z_t} \log p(z_t) + \sum_{i=1}^n \alpha_t^i \nabla_{z_t} \log p(y^i|z_t).
\end{multline}
And with attribute negations $y^1 \wedge ... \wedge y^{m-1} \wedge \overline{y^m} \wedge ... \wedge \overline{y^n}$, without loss of generality, we can factorize the log probability similarly
\begin{multline}
\label{eq:not}
    \nabla_{z_t} \log p(z_t|y^1,...,y^n) = \nabla_{z_t} \log p(z_t) +\\
    \sum_{i=1}^{m-1} \alpha_t^i \nabla_{z_t} \log p(y^i|z_t) - \sum_{i=m}^n \beta_t^i \nabla_{z_t} \log p(y^i|z_t).
\end{multline}

While classifier guidance is useful for compositional generation, there is no guarantee that those results will be anything similar to the original image when doing manipulations.
This is because the generation is not conditioned on the original image.
As there is no constraint on the specific form of the posterior \cite{sohl2015deep}, conditioning on the original image amounts to adding a new guidance term $\gamma_t \nabla_z \log p(\hat{z}|z)$, where $\hat{z}$ is the latent of the image to be manipulated.
For conjunction relations as in Eq.~(\ref{eq:and}), the overall score function for manipulation then becomes 
\begin{multline}
\label{eq:reg}
    \nabla_{z_t} \log p({z_{t}}|y^1,y^2,...,y^n,\hat{z}) = \nabla_{z_t} \log p(z_t) + \\
    \sum_{i=1}^n \alpha_t^i \nabla_{z_t} \log p(y^i|z_t) + \gamma_t \nabla_{z_t} \log p(\hat{z}|z_t),
\end{multline}
similarly for Eq.~(\ref{eq:not}) with the presence of negation.
When $p(\hat{z}|z_t)$ is modeled by an isotropic Gaussian distribution, the new guidance term $\gamma_t \nabla_z \log p(\hat{z}|z)$ behaves as a regularization term $\nabla_{z_t} {\|z_t-\hat{z}\|}_2^2$.

\subsection{Model Training}

A true compositional model should be able to easily encompass new attributes without re-training the whole model.
Indeed, the training of the unconditional diffusion models and the latent classifiers can be decoupled, and such training amounts to maximizing the evidence lower bound (ELBO) of the conditional log-likelihood.
This means that encompassing new attributes simply requires training classifiers on them, and the latent diffusion model as well as used classifiers can be recycled.

We take DDPM as our example and begin with unconditional generation.
\begin{lemma}
\label{lemma:1}
    The unconditional ELBO of DDPM is given by the following equation:
    \begin{multline}
        \label{eq:unconditional_elbo}
        \mathcal{L}_{uncond} :=
        \mathbb{E}_{q(z_{1:T}|z_0)}\bigg[\log \frac{p(z_T)}{q(z_T|z_0)} + \\
        \sum_{t=2}^T \log \frac{p(z_{t-1}|z_t)}{q(z_{t-1}|z_t,z_0)} + \log p(z_0|z_1)\bigg].
    \end{multline}
\end{lemma}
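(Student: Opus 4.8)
The plan is to start from the standard variational lower bound on the marginal log-likelihood and reorganize its summands so that each de-noising step $p(z_{t-1}|z_t)$ is compared against the tractable forward posterior $q(z_{t-1}|z_t,z_0)$ rather than the raw forward transition $q(z_t|z_{t-1})$. First I would write $\log p(z_0)=\log\int p(z_{0:T})\,dz_{1:T}$, insert the forward chain $q(z_{1:T}|z_0)$ as an importance distribution, and apply Jensen's inequality to obtain
\[
\log p(z_0)\;\geq\;\mathbb{E}_{q(z_{1:T}|z_0)}\!\left[\log\frac{p(z_{0:T})}{q(z_{1:T}|z_0)}\right]=:\mathcal{L}_{uncond}.
\]
Next I would substitute the Markov factorizations $p(z_{0:T})=p(z_T)\prod_{t=1}^T p(z_{t-1}|z_t)$ and $q(z_{1:T}|z_0)=\prod_{t=1}^T q(z_t|z_{t-1})$, which turns the bound into $\mathbb{E}_q\big[\log p(z_T)+\sum_{t=1}^T\log\frac{p(z_{t-1}|z_t)}{q(z_t|z_{t-1})}\big]$.

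The crux of the argument, and the step I expect to demand the most care, is the Bayes-rule rewriting of the forward transitions. After isolating the $t=1$ summand, for each $t\geq 2$ I would exploit the Markov property $q(z_t|z_{t-1})=q(z_t|z_{t-1},z_0)$ together with $q(z_t|z_{t-1},z_0)=q(z_{t-1}|z_t,z_0)\,q(z_t|z_0)/q(z_{t-1}|z_0)$ to replace each forward transition by the forward posterior. The residual ratios $q(z_t|z_0)/q(z_{t-1}|z_0)$ then form a telescoping product whose only surviving factors are $q(z_1|z_0)$ and $q(z_T|z_0)$; the former cancels against the denominator of the isolated $t=1$ term, while the latter combines with $\log p(z_T)$ to give the prior-matching ratio.

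Collecting the leftover pieces yields exactly the three groups in \eqref{eq:unconditional_elbo}: the prior term $\log p(z_T)/q(z_T|z_0)$, the sum of de-noising terms $\sum_{t=2}^T\log p(z_{t-1}|z_t)/q(z_{t-1}|z_t,z_0)$, and the reconstruction term $\log p(z_0|z_1)$. The main obstacle is purely bookkeeping: correctly tracking which $z_0$-conditioned factor each index carries through the telescoping step, and verifying that the $q(z_1|z_0)$ contribution produced by Bayes' rule exactly annihilates the $t=1$ denominator. Once those cancellations are confirmed, nothing beyond linearity of expectation and the forward-process Markov assumption is needed, so the lemma follows.
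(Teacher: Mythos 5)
Your proposal is correct and is essentially the proof the paper defers to: the paper simply cites Ho et al.\ (DDPM) for this lemma, and your derivation---Jensen's inequality with $q(z_{1:T}|z_0)$ as the importance distribution, the Bayes-rule rewrite $q(z_t|z_{t-1}) = q(z_{t-1}|z_t,z_0)\,q(z_t|z_0)/q(z_{t-1}|z_0)$ for $t\geq 2$ via the Markov property, and the telescoping cancellation leaving only $q(z_T|z_0)$ and the cancelled $q(z_1|z_0)$---is exactly that standard argument. The bookkeeping you flag checks out: the telescoped $-\log q(z_1|z_0)$ annihilates the isolated $t=1$ forward factor and $\log q(z_T|z_0)$ pairs with $\log p(z_T)$, yielding precisely the three terms of the stated bound.
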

See \cite{ho2020denoising} for the detailed proof.

\begin{lemma}[Compositional generation and manipulation]
\label{lemma:2}
    The conditional ELBO of DDPM with condition $y$ is given by:
    \begin{multline}
    \label{eq:conditional_elbo}
        \mathbb{E}_{q(z_{1:T}|z_0)}\bigg[\sum_{t=1}^T\log p(y|z_{t-1})\bigg] + \mathcal{L}_{uncond} + C,
    \end{multline}
    and with independent conditions $\{y^1,y^2,...,y^n\}$ and $\hat{z}$, the ELBO is given by:
\begin{multline}
    \mathbb{E}_{q(z_{1:T}|x_0)}
    \Bigg[\sum_{t=1}^T\bigg[\sum_{i=1}^n \log p(y^i|z_{t-1}) + \log p(\hat{z}|z_{t-1})\bigg]\Bigg]\\
    + \mathcal{L}_{uncond} + C.
\end{multline}
\end{lemma}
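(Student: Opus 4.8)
The plan is to obtain the conditional ELBO in exactly the same way as the unconditional one (Lemma~\ref{lemma:1}), by applying Jensen's inequality with the \emph{same} variational posterior $q(z_{1:T}|z_0)$, and then isolating the extra factor that conditioning on $y$ introduces into the generative joint. First I would write $\log p(z_0|y) = \log \int q(z_{1:T}|z_0)\,\frac{p(z_{0:T}|y)}{q(z_{1:T}|z_0)}\,dz_{1:T} \ge \mathbb{E}_{q(z_{1:T}|z_0)}\big[\log \tfrac{p(z_{0:T}|y)}{q(z_{1:T}|z_0)}\big]$, using that the forward chain $q$ does not depend on $y$ and is therefore a legitimate proposal here. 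This reduces the problem to decomposing $\log p(z_{0:T}|y)$ and recognizing the unconditional part inside it.

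The crux is the modeling of the guided joint $p(z_{0:T}|y)$. Following the classifier-guidance principle that each reverse step is tilted by the classifier, I would take the guided trajectory distribution to be the product-of-experts $p(z_{0:T}|y) = \frac{1}{Z(y)}\,p(z_{0:T})\prod_{t=1}^T p(y|z_{t-1})$, with $Z(y) = \int p(z_{0:T})\prod_{t=1}^T p(y|z_{t-1})\,dz_{0:T}$. The essential observation is that $Z(y)$ is an integral over the entire trajectory and hence depends on $y$ alone, not on $z_{0:T}$; for a fixed condition it is a constant $C := -\log Z(y)$. Substituting and taking logarithms splits the integrand into $\log\frac{p(z_{0:T})}{q(z_{1:T}|z_0)} + \sum_{t=1}^T \log p(y|z_{t-1}) + C$. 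Taking $\mathbb{E}_{q}$ of the first piece reproduces $\mathcal{L}_{uncond}$ exactly, since $\mathbb{E}_q\big[\log\tfrac{p(z_{0:T})}{q(z_{1:T}|z_0)}\big]$ is precisely the quantity rewritten in the per-step form of Lemma~\ref{lemma:1}, while the remaining pieces give $\mathbb{E}_{q}\big[\sum_{t=1}^T \log p(y|z_{t-1})\big] + C$, which is the claimed first identity.

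For the second identity I would only need conditional independence. Treating $\{y^1,\dots,y^n,\hat z\}$ as independent given each latent $z_{t-1}$, the per-step tilt factorizes as $p(y^1,\dots,y^n,\hat z|z_{t-1}) = \prod_{i=1}^n p(y^i|z_{t-1})\, p(\hat z|z_{t-1})$, so the guided joint becomes $\frac{1}{Z}\,p(z_{0:T})\prod_{t=1}^T \big[\prod_{i=1}^n p(y^i|z_{t-1})\,p(\hat z|z_{t-1})\big]$. Repeating the logarithm-and-expectation step turns this product into the double sum $\sum_{t=1}^T\big[\sum_{i=1}^n \log p(y^i|z_{t-1}) + \log p(\hat z|z_{t-1})\big]$ and absorbs the new normalizer into $C$, giving the stated formula.

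I expect the main obstacle to be conceptual rather than computational: justifying why a classifier factor appears at \emph{every} timestep (the $\sum_{t=1}^T$), rather than collapsing to a single term at $z_0$. Indeed, if one instead chains the exactly-normalized per-step guided transitions $p(z_{t-1}|z_t,y)\propto p(z_{t-1}|z_t)p(y|z_{t-1})$, the intermediate classifier factors telescope and leave only $\log p(y|z_0)$; the per-timestep sum is genuinely a feature of the global product-of-experts definition above, in which all normalization is pushed into the single trajectory-independent constant $Z(y)$. I would therefore be careful to state this product-of-experts model as the \emph{definition} of the guided process — it is also what the practical objective optimizes, since the latent classifier is trained across all noise levels — and to verify that $Z(y)$, together with its multi-condition analogue, truly factors out of the trajectory integral so that it is legitimately the constant $C$.
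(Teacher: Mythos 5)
Your proposal is correct and follows essentially the same route as the paper: a Jensen bound with the label-independent forward posterior $q(z_{1:T}|z_0)$, followed by splitting the guided trajectory density into the unconditional factor (recovering $\mathcal{L}_{uncond}$ via Lemma~\ref{lemma:1}) plus per-step classifier terms, with all normalization absorbed into $C$, and the multi-condition case handled by conditional independence exactly as the paper sketches. The one substantive difference is your bookkeeping of the normalizer, and there your version is actually cleaner than the paper's. The paper writes the per-step relation $p(z_{t-1}|z_t,y) = Z\, p(z_{t-1}|z_t)\, p(y|z_{t-1})$ and calls $Z$ a normalizing constant, then chains it over $t$; strictly, that per-step normalizer is $\int p(z_{t-1}|z_t)p(y|z_{t-1})\,dz_{t-1}$, which depends on $z_t$, and — exactly as your telescoping remark shows (under exact normalization it equals $p(y|z_t)$, so the classifier factors collapse to the boundary term $\log p(y|z_0) - \log p(y|z_T)$ rather than a sum over $t$) — treating it as constant is precisely the standard classifier-guidance approximation, not a free step. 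Your global product-of-experts definition $p(z_{0:T}|y) \propto p(z_{0:T})\prod_{t=1}^T p(y|z_{t-1})$ pushes all normalization into a single trajectory-independent $Z(y)$, which makes the decomposition exact by construction and coincides with the paper's model whenever the per-step $Z$ is genuinely constant (then $Z(y) = Z^{-T}$). A minor cosmetic difference: the paper bounds the joint $\log p(z_0, y)$ while you bound $\log p(z_0|y)$; these differ by $\log p(y)$, which is again absorbed into $C$.
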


\begin{proof}
Lemma.~\ref{lemma:2} can be proved using $p({z}_{t-1}|{z}_t,y) = Z p({z}_{t-1}|{z}_t)p(y|{z}_{t-1})$ ($Z$ is a normalizing constant) and following the same routine as the proof of Lemma.~\ref{lemma:1}.
\allowdisplaybreaks
    \begin{align*}
        & \log p({z}_0,y) \\
        = & \log \int p(z_{0:T}|y)p(y) d z_{1:T} \\
        \geq & \mathbb{E}_{q(z_{1:T}|z_0)} \log \frac{p(z_{0:T}|y)p(y)}{q(z_{1:T}|z_0)}\\
        = & \mathbb{E}_{q(z_{1:T}|z_0)}\bigg[\log \frac{p(z_T)}{q(z_T|z_0)} + \sum_{t=2}^T \log \frac{p(z_{t-1}|z_t,y)}{q(z_{t-1}|z_t,z_0)}\\
        & + \log p(z_0|z_1,y)\bigg] + C_1\\
        = & \mathbb{E}_{q(z_{1:T}|z_0)}\bigg[\log \frac{p(z_T)}{q(z_T|z_0)} + \sum_{t=2}^T \log \frac{p(z_{t-1}|z_t)}{q(z_{t-1}|z_t,z_0)} \\
        & + \log p(z_0|z_1) + \sum_{t=1}^T \log p(y|z_{t-1})\bigg] + C_2 \\
        = & \mathcal{L}_{uncond} + \mathbb{E}_{q(z_{1:T}|z_0)} \bigg[\sum_{t=1}^T \log p(y|z_{t-1})\bigg] + C_2.
    \end{align*}
    
For clarity purposes, we only show the proof with single condition $y$, but derivations can be easily extended to multiple $y$ for compositional generation, and the cases with $\hat{z}$ for manipulation.
\end{proof}

Lemma.~\ref{lemma:2} states that training unconditional diffusion models and their latent classifiers is equivalent to maximizing the ELBO of joint log-likelihood of $z$ and $y$ up to a constant.

\subsection{Connection to Linear Arithmetic}
\label{Connection to Linear Arithmetic}

The regularized guidance manipulates a given latent $\hat{z}$ in a non-linear fashion, but it degrades to linear manipulation with additional assumptions.
We take the case where there are only conjunction relations as an example and consider Eq.~(\ref{eq:reg}).

\begin{lemma}[Compositional manipulation and linear arithmetic]
When $p(z_t)$ is non-informative and $\log p(y|z_t)$ are linear, the proposed manipulation is endowed with an analytic solution 
\begin{align}
    z_0=\hat{z} + \frac{1}{\gamma_0} \sum_{i=1}^n \alpha_0^i w^i.
\end{align}
\end{lemma}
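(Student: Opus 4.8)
The plan is to reduce the guidance vector field in Eq.~(\ref{eq:reg}) to an affine function of $z$ under the three stated assumptions, and then locate its unique stationary point, which is the mode (equivalently the MAP target) that the guided reverse process steers toward. First I would substitute each assumption into the score. The non-informative prior makes $\nabla_{z_t}\log p(z_t)=0$, so the unconditional term drops out. The linearity assumption, read as $\log p(y^i|z_t)=\langle w^i, z_t\rangle + c^i$, gives the constant gradient $\nabla_{z_t}\log p(y^i|z_t)=w^i$. Finally, modeling $p(\hat z|z_t)$ as an isotropic Gaussian---exactly the case noted after Eq.~(\ref{eq:reg}), where the guidance behaves like the regularizer $\nabla_{z_t}\|z_t-\hat z\|_2^2$---yields $\nabla_{z_t}\log p(\hat z|z_t)\propto (\hat z - z_t)$, with the proportionality constant (the inverse variance) absorbed into $\gamma_t$.

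With these substitutions the guided score at the terminal clean level $t=0$ becomes the affine map
\begin{align*}
\nabla_{z}\log p(z|y^1,\dots,y^n,\hat z)=\sum_{i=1}^n \alpha_0^i w^i + \gamma_0(\hat z - z).
\end{align*}
Because this field is the gradient of a concave quadratic in $z$, it has a single stationary point that is its global maximizer. Setting the right-hand side to zero and solving the resulting linear system for $z$ gives $\gamma_0(z-\hat z)=\sum_{i=1}^n \alpha_0^i w^i$, i.e.\ the claimed $z_0=\hat z+\frac{1}{\gamma_0}\sum_{i=1}^n \alpha_0^i w^i$. I would then remark that the same computation carries through for the negation case of Eq.~(\ref{eq:not}), with the signs of the corresponding $\beta$ terms flipped.

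The step I expect to be the main obstacle is not the algebra but the justification that this stationary point is the actual output of the manipulation rather than merely a critical point of a per-step objective. The cleanest argument is that, under the assumptions, the full conditional $p(z|y^1,\dots,y^n,\hat z)$ is itself Gaussian---a flat prior tilted by log-linear factors and multiplied by a Gaussian likelihood is a shifted Gaussian---so its mode, its mean, and the target of the score-driven reverse process all coincide, and the closed form above is exactly that mode. I would make explicit that ``analytic solution'' here denotes this mode, and flag the mild abuse that the log-linear classifiers $\log p(y^i|z_t)=\langle w^i,z_t\rangle+c^i$ are unnormalized, so the $w^i$ play the role of fixed attribute directions precisely as in vector-arithmetic editing.
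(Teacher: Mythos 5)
Your proposal is correct and takes essentially the same route as the paper's proof: drop the prior score under the non-informative assumption, take the constant gradients $w^i$ from the linear classifiers (scale absorbed into $\alpha_0^i$), model $p(\hat z|z_t)$ as an isotropic Gaussian, and solve the stationarity condition of the converged score at $t=0$. Your write-up is in fact slightly more careful than the paper's: writing the regularizer gradient as $\gamma_0(\hat z - z)$ keeps the signs consistent with the stated solution (the paper's displayed condition $\sum_{i=1}^n \alpha_0^i w^i + \gamma_0(z_0-\hat z)=0$ has a sign slip relative to the lemma), and your concave-quadratic/Gaussian-mode remark supplies the justification---left implicit in the paper---for why the zero of the score is the actual output of the manipulation.
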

\begin{proof}
We first assume that $p(z_t)$ is a non-informative distribution where $\nabla_{z_t}p(z_t)=0$.
Then we model each $\log p(y^i|z_t)$ with a linear classifier $z \mapsto w^Tz+b$, so that the gradient $\nabla_{z_t} \log p(y^i|z_t) \sim w$ up to a scale factor\footnote{Let the scalar absorbed by $\alpha_t^i$.}.
Now when the reverse process of latent diffusion model converges at $t=0$, the whole Eq.~(\ref{eq:reg}) should converge to 0 as follows:
\begin{align}
    \sum_{i=1}^n \alpha_0^i w^i + \gamma_0 (z_0-\hat{z}) = 0,
\end{align}
which leads to the above analytic solution.
\end{proof}

For attribute negation, the solution perturbs $\hat{z}$ towards the negative direction of the classifiers.
This is a natural multi-attributes generalization of the vector arithmetic method, and we refer it as the linear version of latent classifier guidance in later comparisons.


\section{Experiments}
\label{sec:experiments}

\begin{figure*}
\centering
\begin{minipage}[c]{0.48\textwidth}
    \centering
    \includegraphics[width=.95\textwidth]{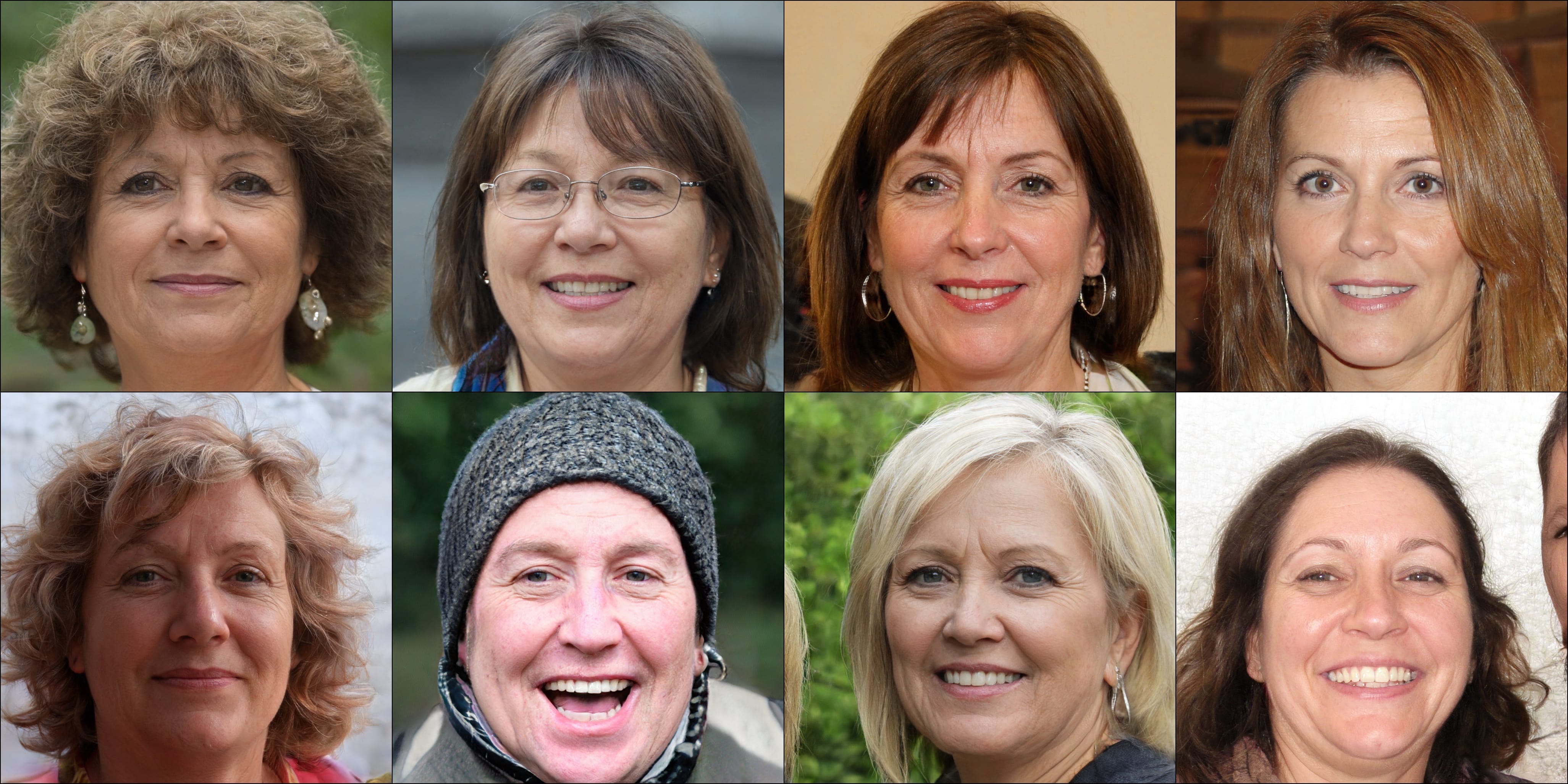}  
    \subcaption{LACE-LD~\cite{nie2021controllable}}
\end{minipage}
\begin{minipage}[c]{0.48\textwidth}
    \centering
    \includegraphics[width=.95\textwidth]{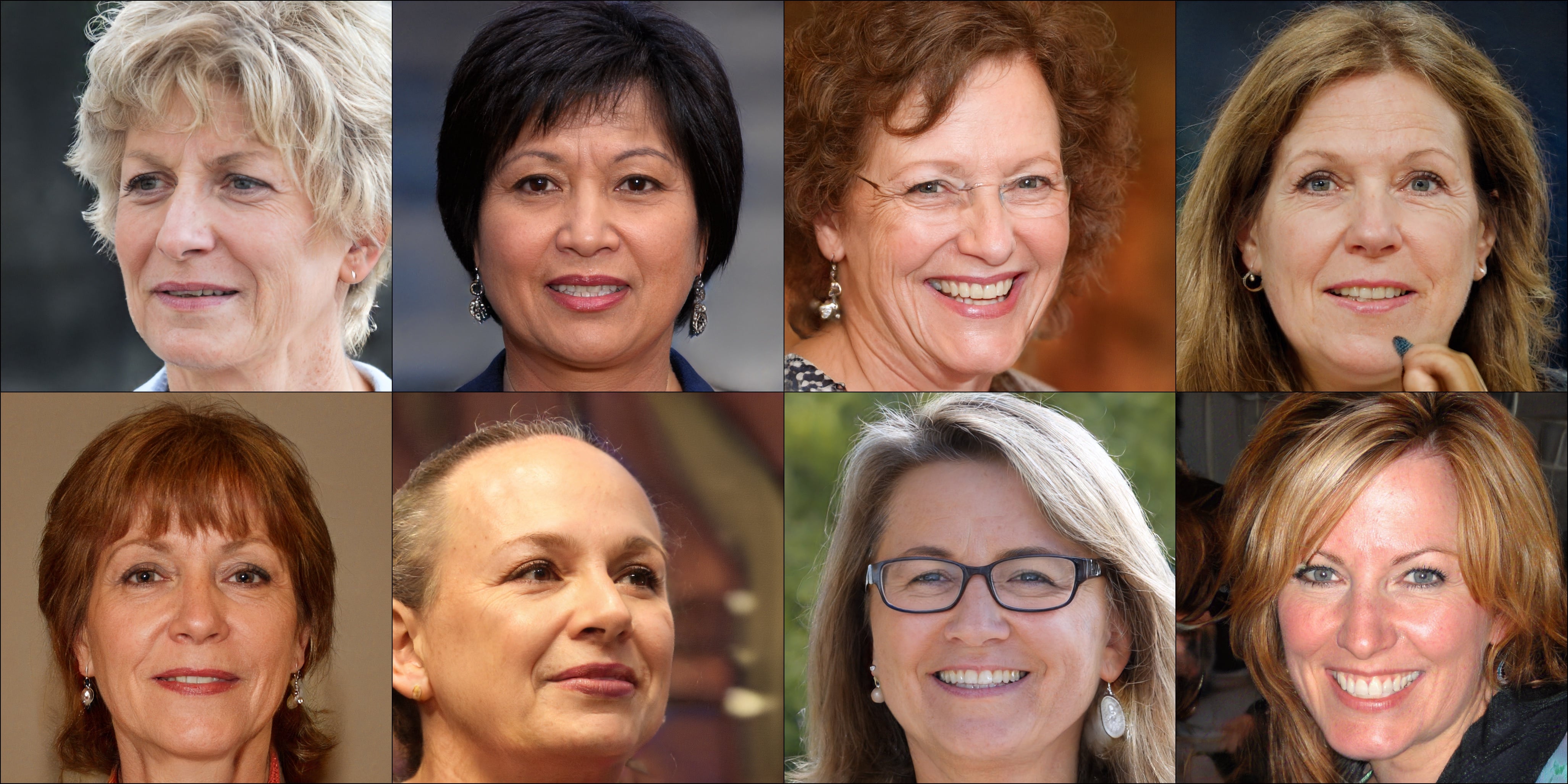} 
    \subcaption{LACE-ODE~\cite{nie2021controllable}}
\end{minipage}
\begin{minipage}[c]{0.48\textwidth}
    \centering
    \includegraphics[width=.95\textwidth]{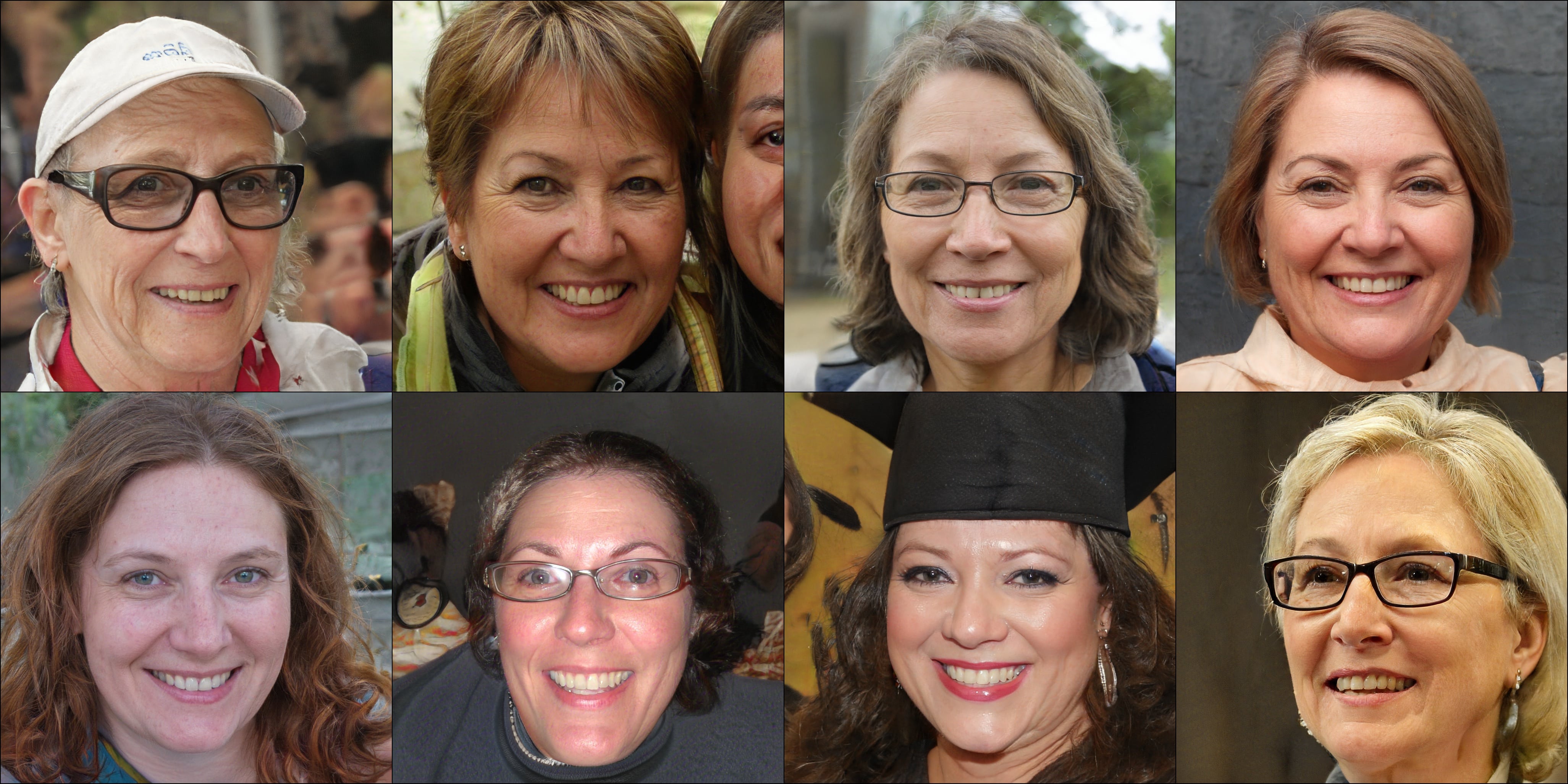}  
    \subcaption{LCG-Linear (Ours)}
\end{minipage}
\begin{minipage}[c]{0.48\textwidth}
    \centering
    \includegraphics[width=.95\textwidth]{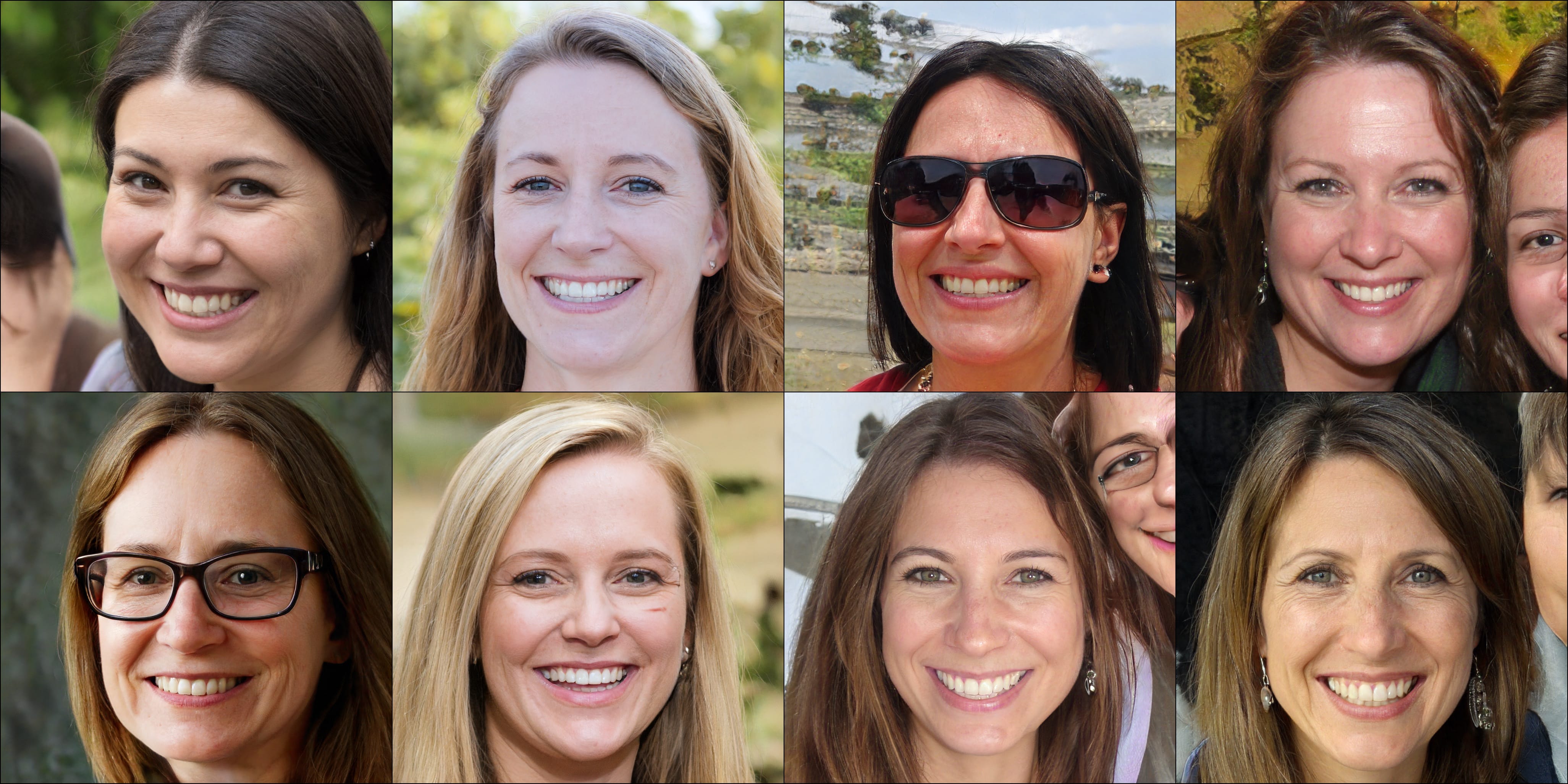}  
    \subcaption{LCG-Diffusion (Ours)}
\end{minipage}
\caption{Qualitative comparison among different methods for compositional generation based on the latenet space of the pre-trained StyleGAN2 generator under the resolution of 1024$\times$1024. The target attribute conditions are: female, smile, and 55 y/o (i.e., ``years old'').}
\label{fig:stylegan2-comp-generate}
\end{figure*}

We evaluate classifier-guided latent diffusion models for compositional generation and manipulation tasks on two pre-trained models, StyleGAN2 \cite{karras2020analyzing} and Diffusion Autoencoder \cite{preechakul2022diffusion}.
To use the described framework in the intermediate latent space ($\mathcal{W}_\text{s}$ space) of a pre-trained StyleGAN2, we first train latent DDIM \cite{song2020denoising} on 100,000 $w_\text{s}$ vectors sampled from the push-forward distribution given by the style generation.
We then train linear classifiers on the $\mathcal{W}_\text{s}$ space using the latent-label pairs provided by \cite{abdal2021styleflow}.
Note that although Eq.~(\ref{eq:conditional_elbo}) requires classifiers to be time-dependent, we find that using the same linear classifiers trained on clean $w_\text{s}$ vectors can still produce reasonable results in our preliminary experiments.
The latent diffusion model is the same as the latent DDIM used in \cite{preechakul2022diffusion}.
The performance of the classifiers can be found in Table \ref{tab:stylegan2-latent-acc}.
For Diffusion Autoencoder, we use their pre-trained latent diffusion model and linear classifiers.

For real image manipulation, we also need to encode input images to the latent space.
With StyleGAN2, we use the optimization-based inversion method in \cite{karras2020analyzing} to get the initial latent space $\mathcal{Z}_\text{s}$ and intermediate $\mathcal{W}_\text{s}$ space encodings and then employ the pre-trained pSp encoder \cite{richardson2021encoding} to get $\mathcal{W}_\text{s}+$ space encodings, where $\mathcal{W}_\text{s}+$ is a concatenation of 18 different 512-dimensional $w_\text{s}$ vectors in StyleGAN2.
With Diffusion Autoencoder, we can directly use their pre-trained encoders to get semantic vectors.

Following \cite{nie2021controllable}, we consider three metrics for our evaluation: Fréchet Inception Distance (FID) \cite{heusel2017gans}, face identity loss (ID) \cite{abdal2021styleflow} and conditional accuracy (ACC).
FID measures generation quality by comparing the Inception feature distribution of generated outputs and real images.
ID reflects the ability of a manipulation method to preserve the identity of an input face. 
A pair of input and manipulated face images are embedded by a pre-trained face recognition model\footnote{\url{https://github.com/ageitgey/face_recognition}}, and the ID score is computed as the distance between their embeddings.
ACC measures the efficacy of manipulation, which is the accuracy of classifying attributes of generated images with randomly sampled target conditions using off-the-shelf image classifiers.

\subsection{Compositional Generation}

\begin{table}[t]
    \centering
    \caption{Validation and test accuracy of linear latent classifiers of StyleGAN2.}
    \resizebox{0.95\linewidth}{!}{
    \begin{tabular}{ccc}
        \hline
        Attribute & Validation Accuracy (\%) & Test Accuracy (\%) \\
        \hline
        Smile       & 92.00 & 91.67\\
        Gender      & 93.40 & 94.20\\
        Glasses     & 92.60 & 91.30\\
        Beard       & 93.40 & 91.60\\
        Hair color  & 75.40 & 75.50\\
        Yaw         & 98.07 & 98.13\\
        Age         & 93.37 & 93.64\\
        \hline
    \end{tabular}
    }
    \label{tab:stylegan2-latent-acc}
\end{table}

\begin{figure*}
\centering
\begin{minipage}[c]{0.48\textwidth}
    \centering
    \includegraphics[width=.95\textwidth]{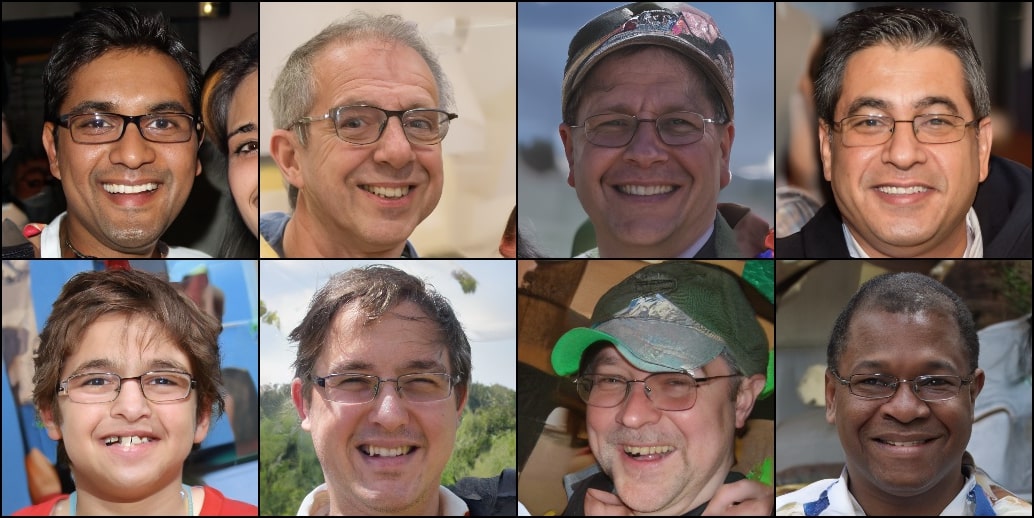}  
    \subcaption{LCG-Linear (Ours)}
\end{minipage}
\begin{minipage}[c]{0.48\textwidth}
    \centering
    \includegraphics[width=.95\textwidth]{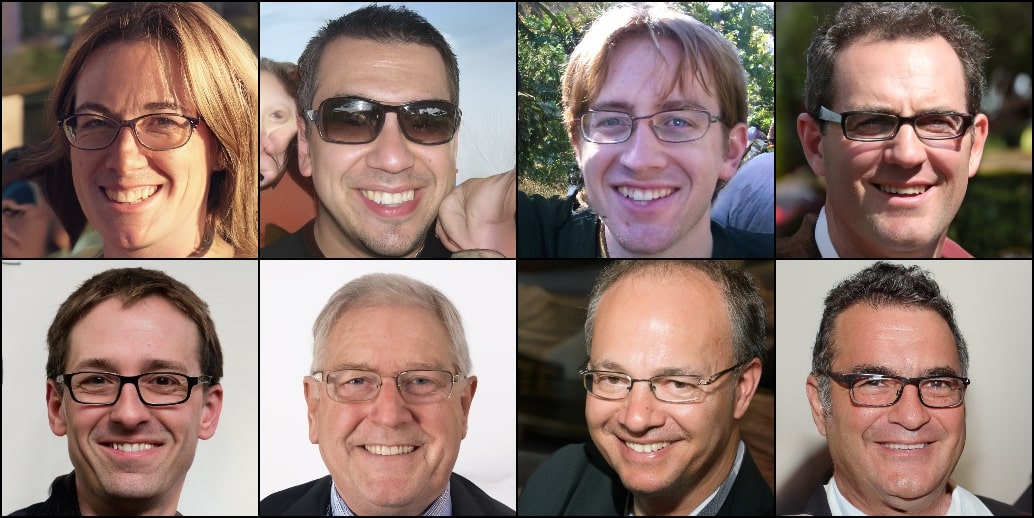}  
    \subcaption{LCG-Diffusion (Ours)}
\end{minipage}
\caption{Qualitative comparison of our methods for compositional generation based on the latent space of Diffusion Autoencoder generator under the resolution of 256$\times$256. The target attribute conditions are: male, smile, and glasses.}
\label{fig:diffae-comp-generate}
\end{figure*}

\begin{table*}[t]
\caption{Quantative comparison of different methods for compositional generation based on the latent space of pre-trained StyleGAN2.}
\vspace{-4mm}
\label{stylegan2-comparison-compositional-generate}
\begin{center}
\resizebox{0.75\textwidth}{!}{
\begin{tabular}{l|c|ccc|c|ccc}
\hline
\multirow{3}*{Method}  & \multicolumn{4}{c|}{gender, smile, age} & \multicolumn{4}{c}{-gender, smile, -haircolor}\\
\cline{2-9}
& \multirow{2}*{FID $\downarrow$} & \multicolumn{3}{c|}{ACC $\uparrow$} & \multirow{2}*{FID $\downarrow$} & \multicolumn{3}{c}{ACC $\uparrow$}\\ 
\cline{3-5}
\cline{7-9}
& & gender & smile & age & & gender & smile & haircolor  \\
\hline
StyleFlow~\cite{abdal2021styleflow}        & 43.88 & 0.718 & 0.870 & 0.874 & --- & --- & --- & ---\\
LACE-LD~\cite{nie2021controllable}          & 22.34 & 0.953 & 0.954 & \textbf{0.925} & \textbf{22.86} & 0.678 & 0.958 & 0.924\\
LACE-ODE~\cite{nie2021controllable}         & \textbf{22.03} & 0.964 & 0.967 & \textbf{0.925} & 23.51 & 0.649 & 0.970 & 0.935\\ \hline
LCG-Linear (Ours)      & 22.46 & 0.980 & \textbf{0.982} & 0.863 & 23.94 & 0.948 & \textbf{0.995} & \textbf{0.936}\\
LCG-Diffusion (Ours)  & 26.49 & \textbf{0.981} & 0.968 & 0.863 & 29.62 & \textbf{0.987} & 0.954 & 0.906\\
\hline
\end{tabular}
}
\end{center}
\label{tab:stylegan2-comp-generate}
\end{table*}

We first evaluate the ability of latent classifier guidance to generate images with multiple desired attributes.
For high-resolution images (1024$\times$1024), we select StyleGAN2 as the pre-trained generator; 
for low-resolution images (256$\times$256), we use Diffusion Autoencoder.

We compare our proposed method, which we refer as LCG (Latent Classifier Guidance) from below, with StyleFlow \cite{abdal2021styleflow} and LACE \cite{nie2021controllable}.
Results of StyleFlow are directly taken from \cite{nie2021controllable} for the conjunction of ``gender'', ``smile'' and ``age'', while it cannot handle the other compositional task where negation relations are involved.
To compare with LACE, we use their official implementation.
Note that LACE is not applicable for Diffusion Autoencoder as its semantic latent space is not endowed with a parameterized distribution as the $\mathcal{Z}_\text{s}$ or $\mathcal{W}_\text{s}$ space of StyleGAN2.
However, we can still apply latent classifier guidance because it can be easily fitted by diffusion models.

The quantitative comparison is shown in Table~\ref{tab:stylegan2-comp-generate} and the qualitative comparison shown in Fig.~\ref{fig:stylegan2-comp-generate} and Fig.~\ref{fig:diffae-comp-generate}.
While for the quantitative results target conditions are randomly sampled for each attribute, for the qualitative results, we use fixed targets for the sake of visualization.
As we can see, with latent classifier guidance, using simple linear arithmetic (LCG-Linear) and latent diffusion models (LCG-Diffusion) both perform competitively against previous non-linear methods.

\subsection{Compositional Manipulation}

We evaluate latent classifier guidance on manipulating both synthetic and real images.
\vspace{-10pt}
\paragraph{Synthetic Images}
To evaluate synthetic image manipulation, we first sample latent codes in $\mathcal{Z}_\text{s}$ space and $\mathcal{W}_\text{s}$ space, then generate their corresponding output images.
To ensure fair comparisons, we use the style network of StyleGAN2 to generate $w_\text{s}$ vectors following \cite{nie2021controllable} rather than sample vectors from the latent diffusion model that we learn.
We then sequentially edit each synthetic image given the target conditions.
Results are shown in Table~\ref{tab:comparison-seq-edit}.
Note that both of the linear arithmetic based and latent diffusion model based method achieves competitive FID and ID scores with most attributes successfully manipulated (except for ``glasses'').

\begin{table*}[t]
\caption{Quantative comparison of different methods for sequential editing with StyleGAN2.}
\vspace{-4mm}
\label{tab:comparison-seq-edit}
\begin{center}
\resizebox{0.6\linewidth}{!}{
\begin{tabular}{l|c|c|cccc}
\hline
\multirow{2}*{Method}  & \multirow{2}*{FID $\downarrow$} & \multirow{2}*{ID $\downarrow$} & \multicolumn{4}{c}{ACC $\uparrow$}\\ 
\cline{4-7}
& & &yaw & smile & age & glasses  \\
\hline
StyleFlow~\cite{abdal2021styleflow}        & 44.13 & 0.549 & \textbf{0.947} & 0.773 & 0.817 & 0.876 \\
LACE-ODE~\cite{nie2021controllable}         & 27.49 & 0.501 & 0.938 & 0.956 & \textbf{0.881} & \textbf{0.997}\\ \hline
LCG-Linear (Ours)      & 29.48 & \textbf{0.290} & 0.887 & \textbf{0.983} & {0.875} & 0.786 \\
LCG-Diffusion (Ours)  & \textbf{24.06} & 0.445 & 0.903 & 0.963 & 0.845 & 0.843 \\
\hline
\end{tabular}
}
\end{center}
\end{table*}

\vspace{-10pt}
\paragraph{Real Images}
Manipulating real images can be much harder than manipulating synthetic images as it sometimes involves inverting source images to their latent codes.
Latent classifier guidance being space-agnostic brings additional advantages when editing real images.
It is well-known that not all real images can be encoded into the $\mathcal{Z}_\text{s}$ space and $\mathcal{W}_\text{s}$ space of StyleGAN, and expanded spaces such as $\mathcal{W}_\text{s}$+ space \cite{abdal2019image2stylegan} and $\mathcal{S}$ space \cite{wu2021stylespace} are better choices for real image editing.
However, LACE is restricted to the intermediate space of StyleGAN and thus cannot leverage the richness of the expanded spaces.
Moreover, it also requires inverting input images to $\mathcal{Z}_\text{s}$ space, which is very challenging.
Latent diffusion models, on the other hand, can be trained on either existing or new expanded spaces, where the semantics are richer and the inversion is easier.

As shown in Figure~\ref{fig:stylegan2-real-edit}, latent classifier guidance outperforms LACE in terms of real image editing.
For LACE, the identities of the manipulation change dramatically in all three cases.
This is because it is generally hard to invert real input images to $\mathcal{Z}_\text{s}$ space, which is required for LACE's manipulation.
On the other hand, the latent classifier guidance only requires inversion into $\mathcal{W}_\text{s}$ or $\mathcal{W}_\text{s}+$ space and controls attributes better as well as preserves the identity more faithfully than LACE.
$\mathcal{W}_\text{s}$ space manipulation controls the attributes very well, but the image quality is sub-optimal due to the limited expressiveness of $\mathcal{W}_\text{s}$ space.
$\mathcal{W}_\text{s}+$ space manipulation provides better image quality, but the attributes are harder to control, \eg, the ``glasses'' attribute in the second row.
This is because $\mathcal{W}_\text{s}+$ space has higher dimensions and training well-behaved classifiers can be harder due to problems such as over-fitting.

\begin{figure}
\centering
\begin{minipage}[c]{0.02\textwidth}
    \centering
    \small(1)
\end{minipage}
\begin{minipage}[c]{0.10\textwidth}
    \centering
    \includegraphics[width=.95\textwidth]{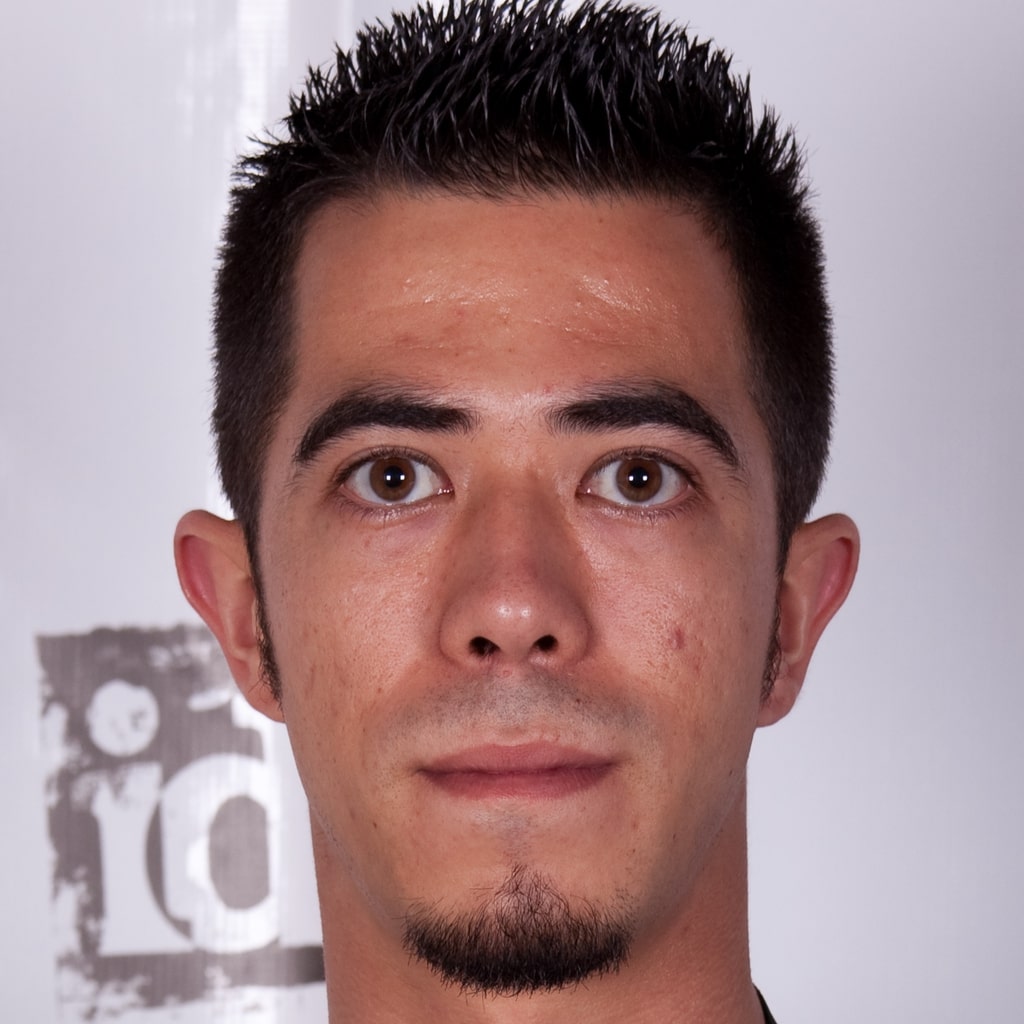}
\end{minipage}
\begin{minipage}[c]{0.10\textwidth}
    \centering
    \includegraphics[width=.95\textwidth]{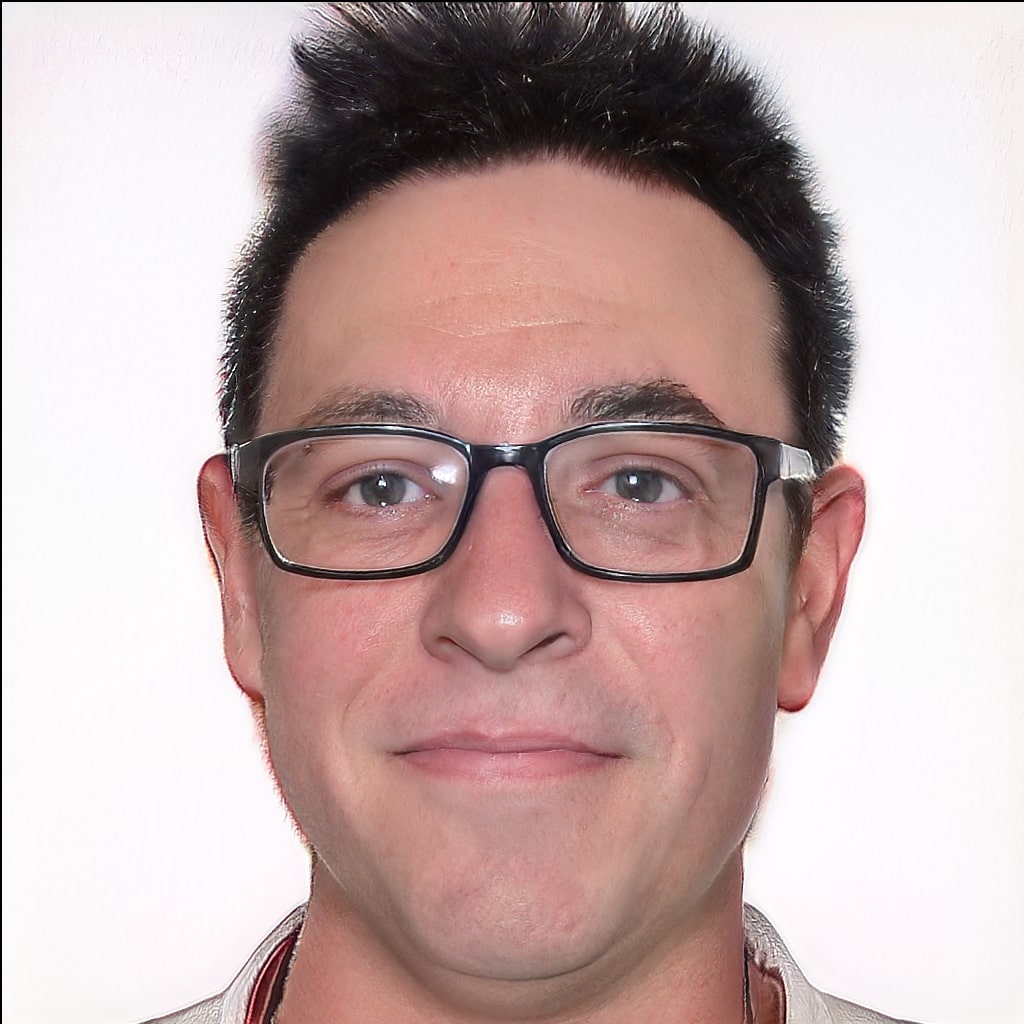}
\end{minipage}
\begin{minipage}[c]{0.10\textwidth}
    \centering
    \includegraphics[width=.95\textwidth]{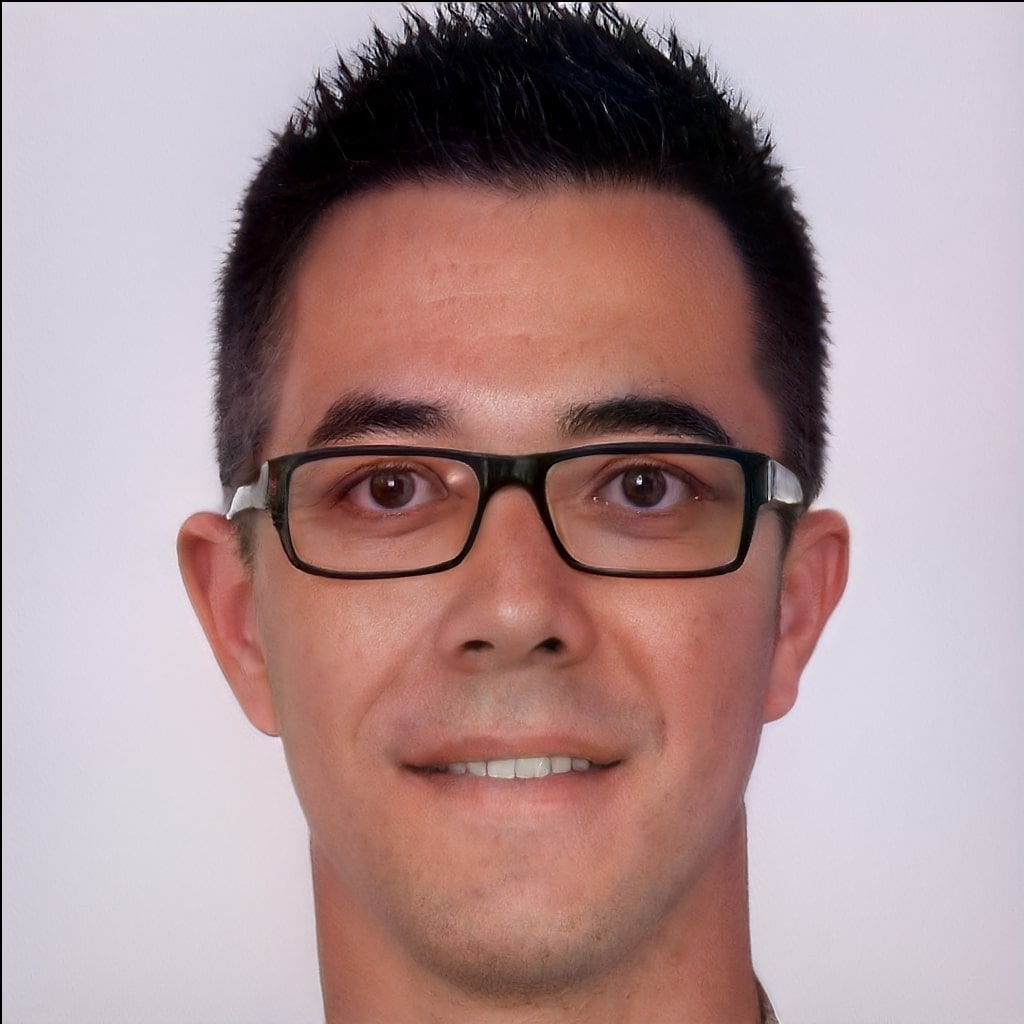}
\end{minipage}
\begin{minipage}[c]{0.10\textwidth}
    \centering
    \includegraphics[width=.95\textwidth]{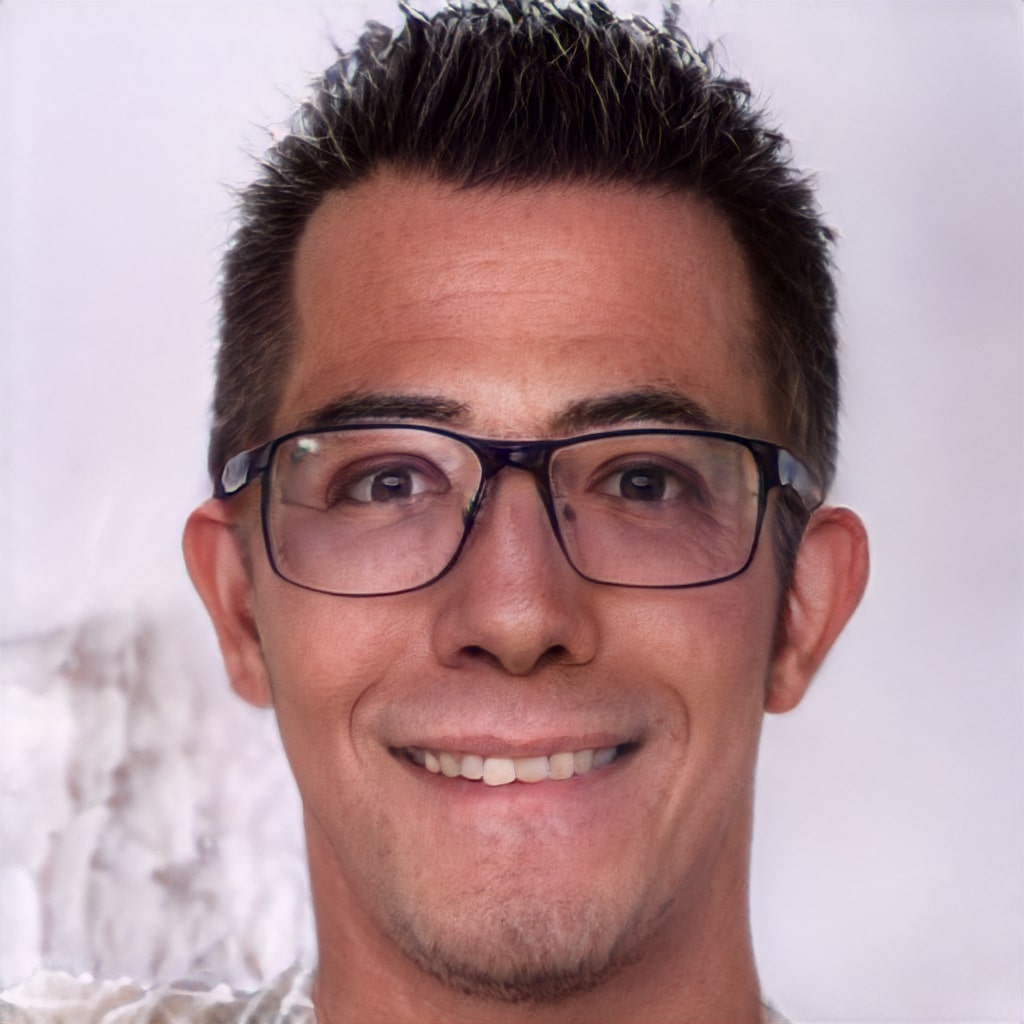}
\end{minipage}\\

\begin{minipage}[c]{0.02\textwidth}
    \centering
    \small(2)
\end{minipage}
\begin{minipage}[c]{0.10\textwidth}
    \centering
    \includegraphics[width=.95\textwidth]{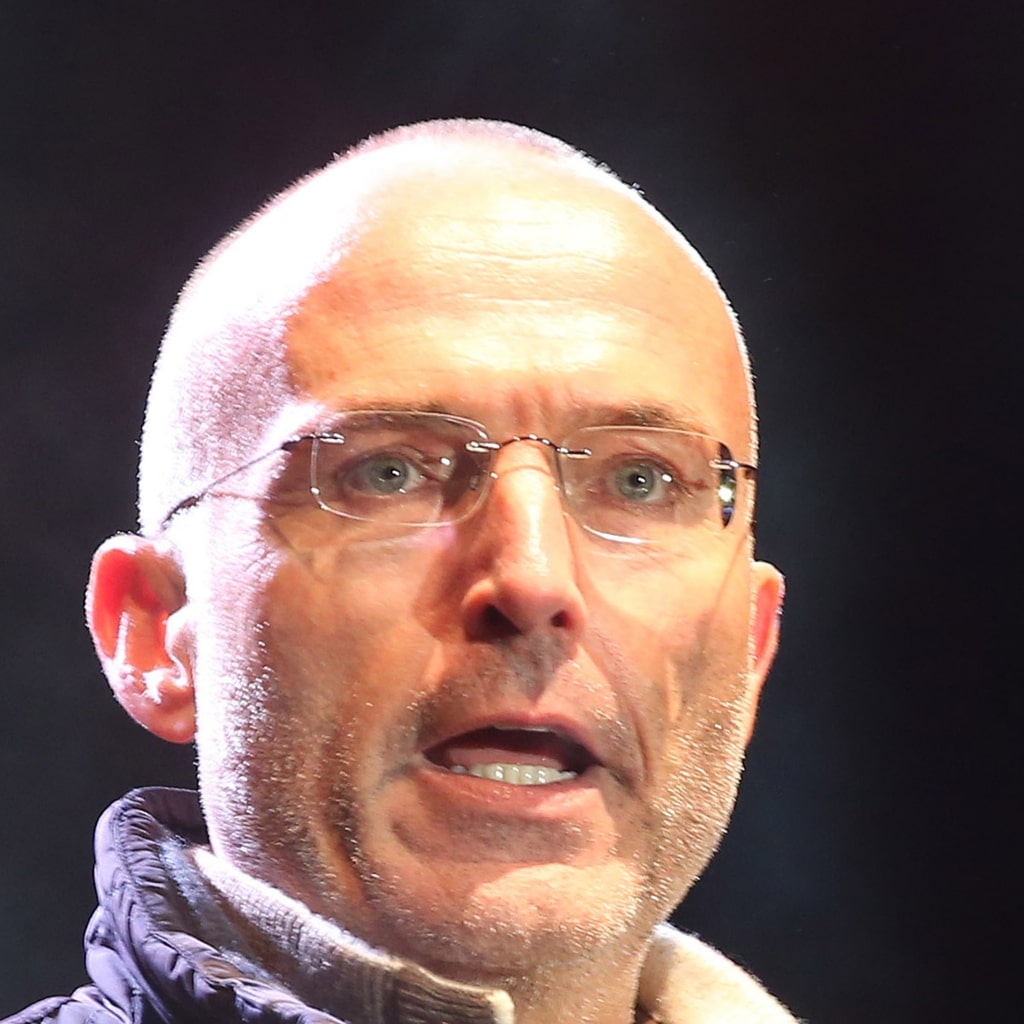}
\end{minipage}
\begin{minipage}[c]{0.10\textwidth}
    \centering
    \includegraphics[width=.95\textwidth]{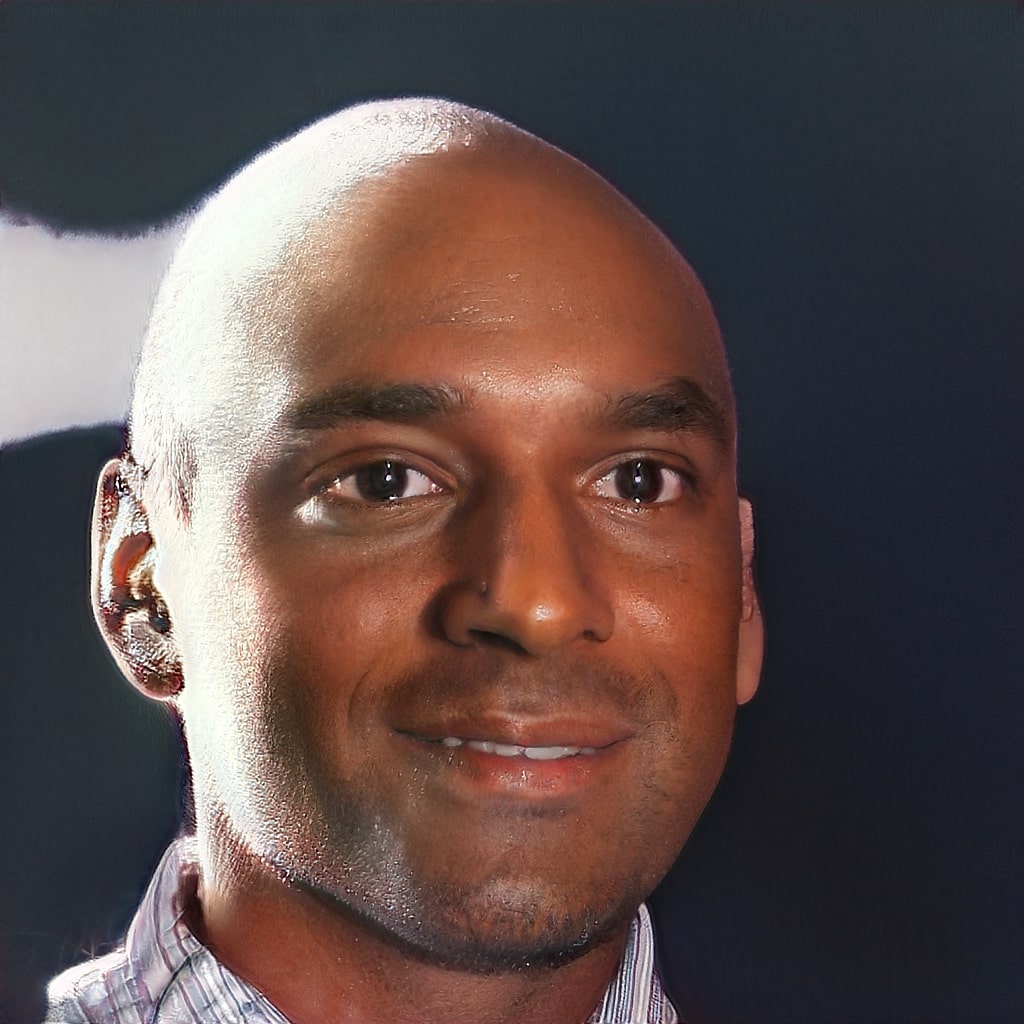}
\end{minipage}
\begin{minipage}[c]{0.10\textwidth}
    \centering
    \includegraphics[width=.95\textwidth]{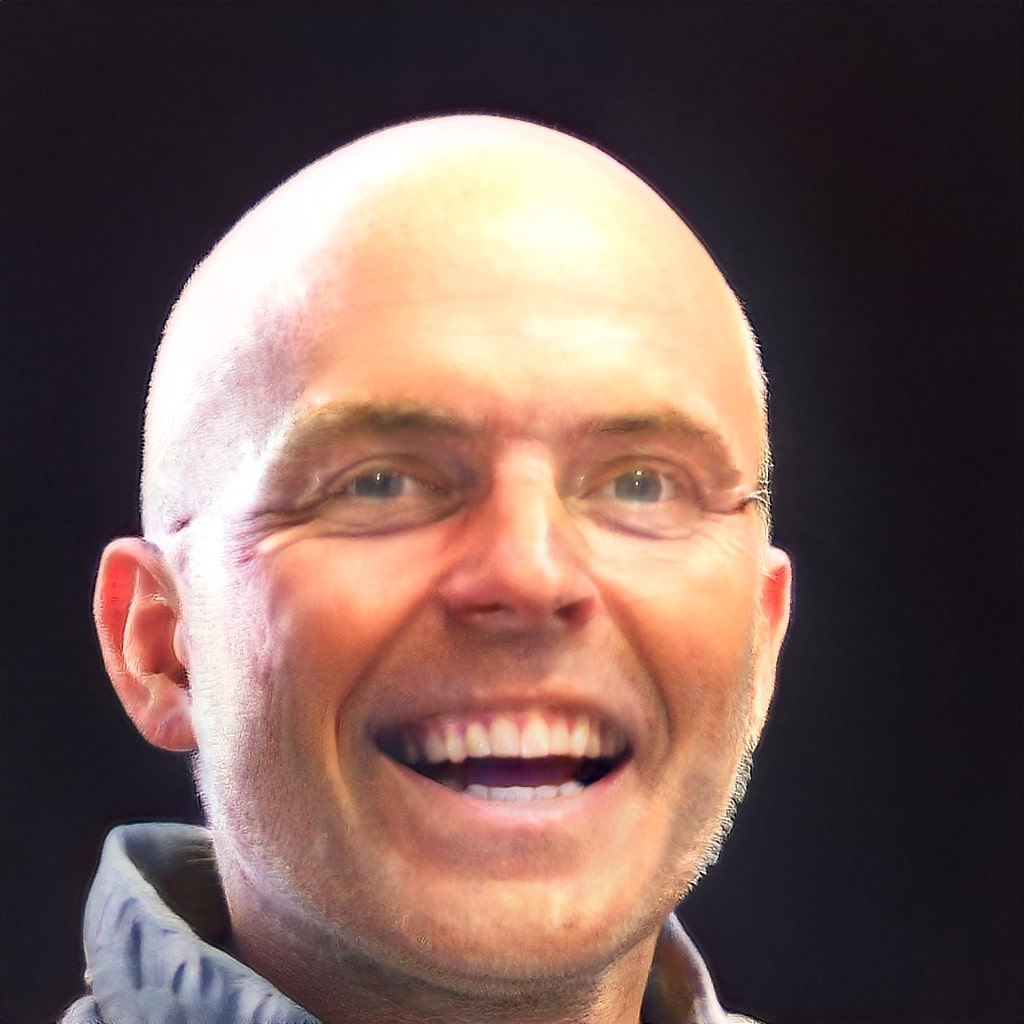}
\end{minipage}
\begin{minipage}[c]{0.10\textwidth}
    \centering
    \includegraphics[width=.95\textwidth]{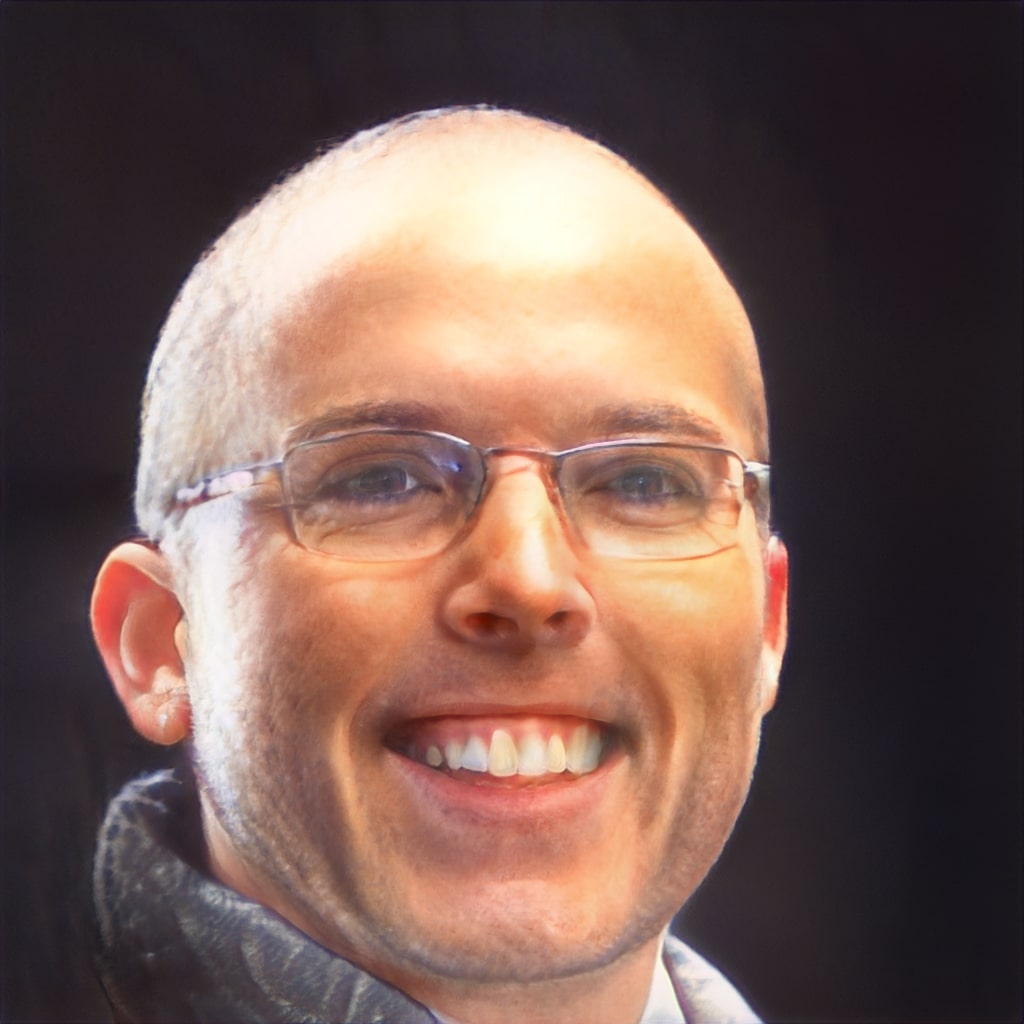}
\end{minipage}\\

\begin{minipage}[c]{0.02\textwidth}
    \centering
    \small(3)
\end{minipage}
\begin{minipage}[c]{0.10\textwidth}
    \centering
    \includegraphics[width=.95\textwidth]{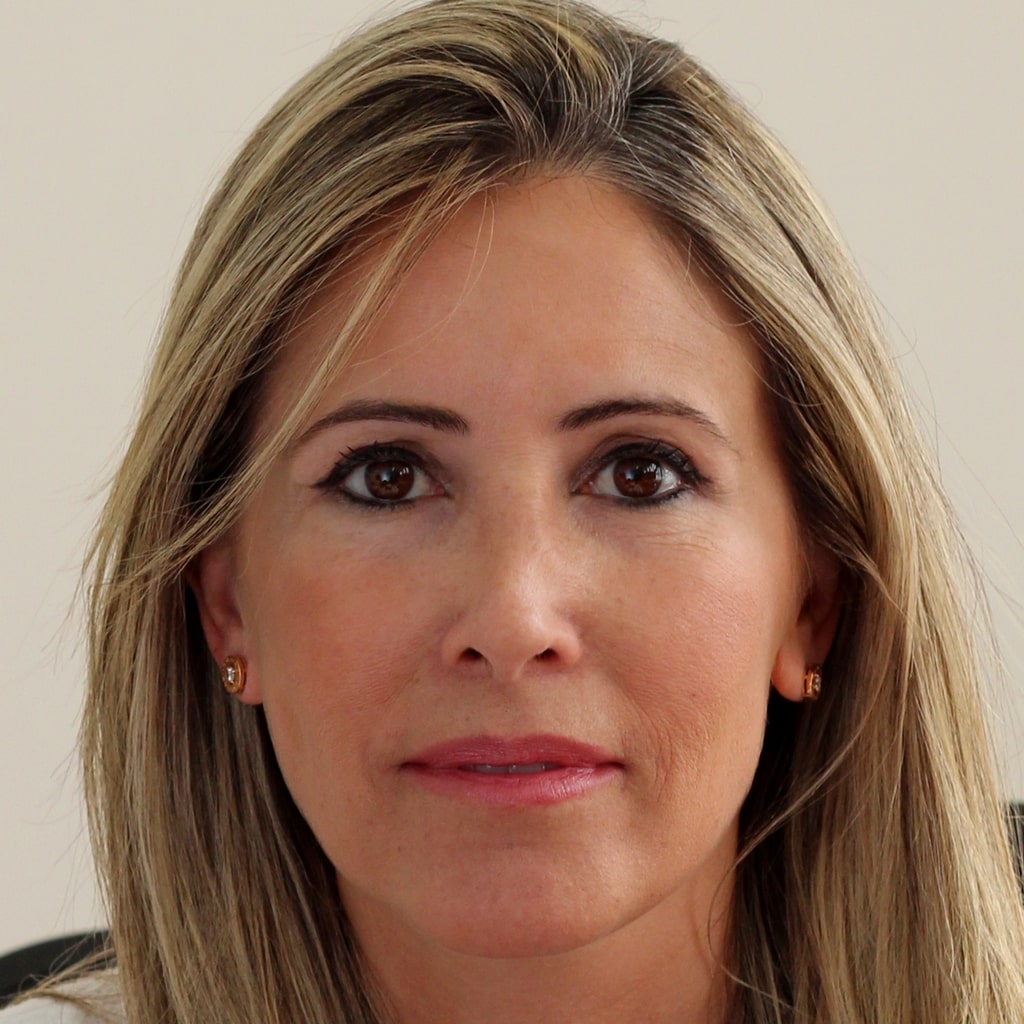}  
    \subcaption{}
\end{minipage}
\begin{minipage}[c]{0.10\textwidth}
    \centering
    \includegraphics[width=.95\textwidth]{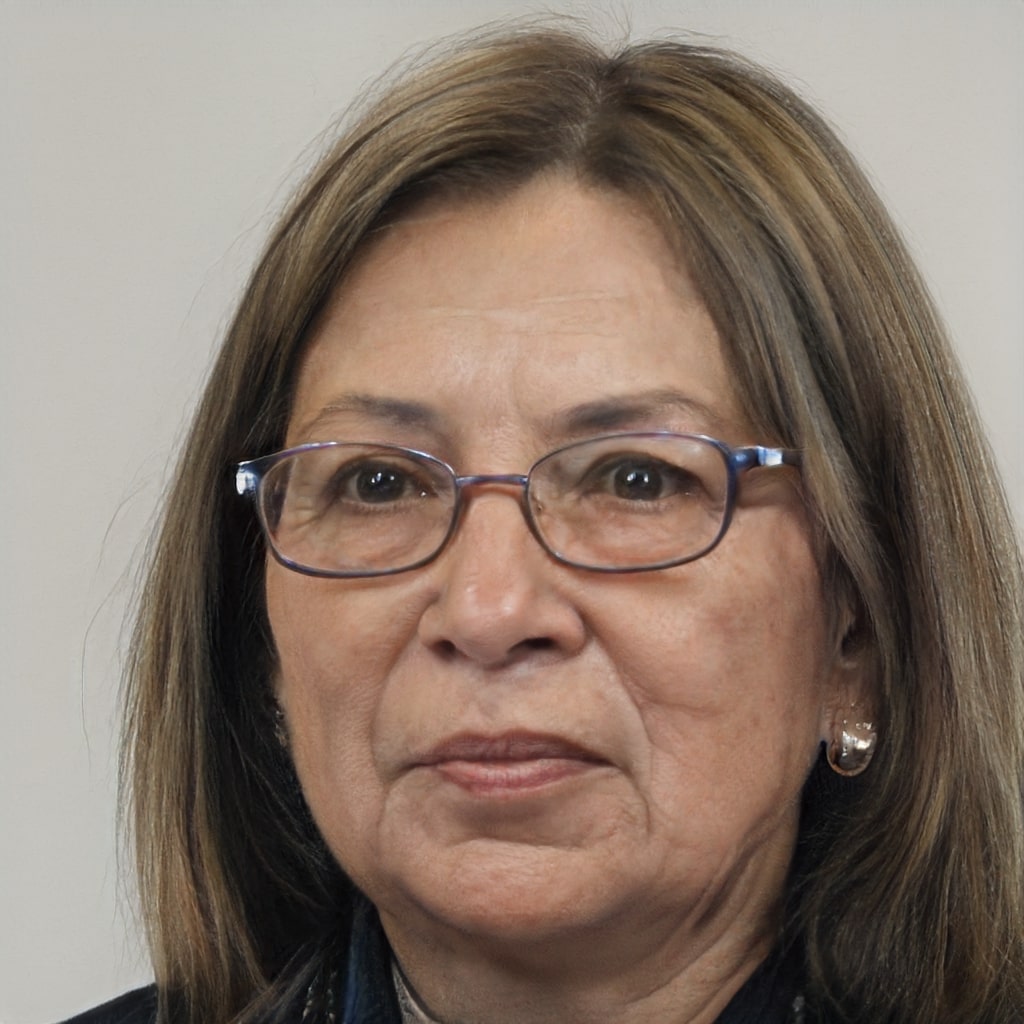}  
    \subcaption{}
\end{minipage}
\begin{minipage}[c]{0.10\textwidth}
    \centering
    \includegraphics[width=.95\textwidth]{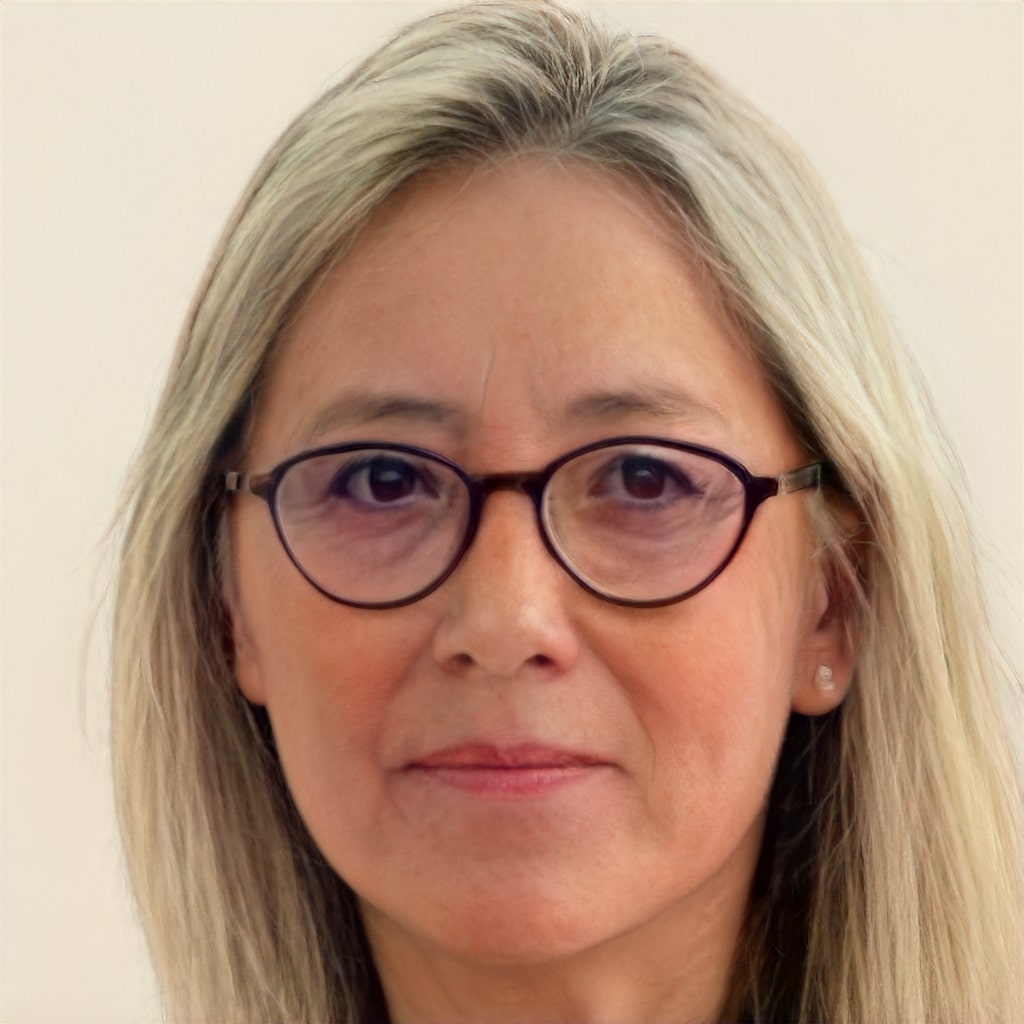}  
    \subcaption{}
\end{minipage}
\begin{minipage}[c]{0.10\textwidth}
    \centering
    \includegraphics[width=.95\textwidth]{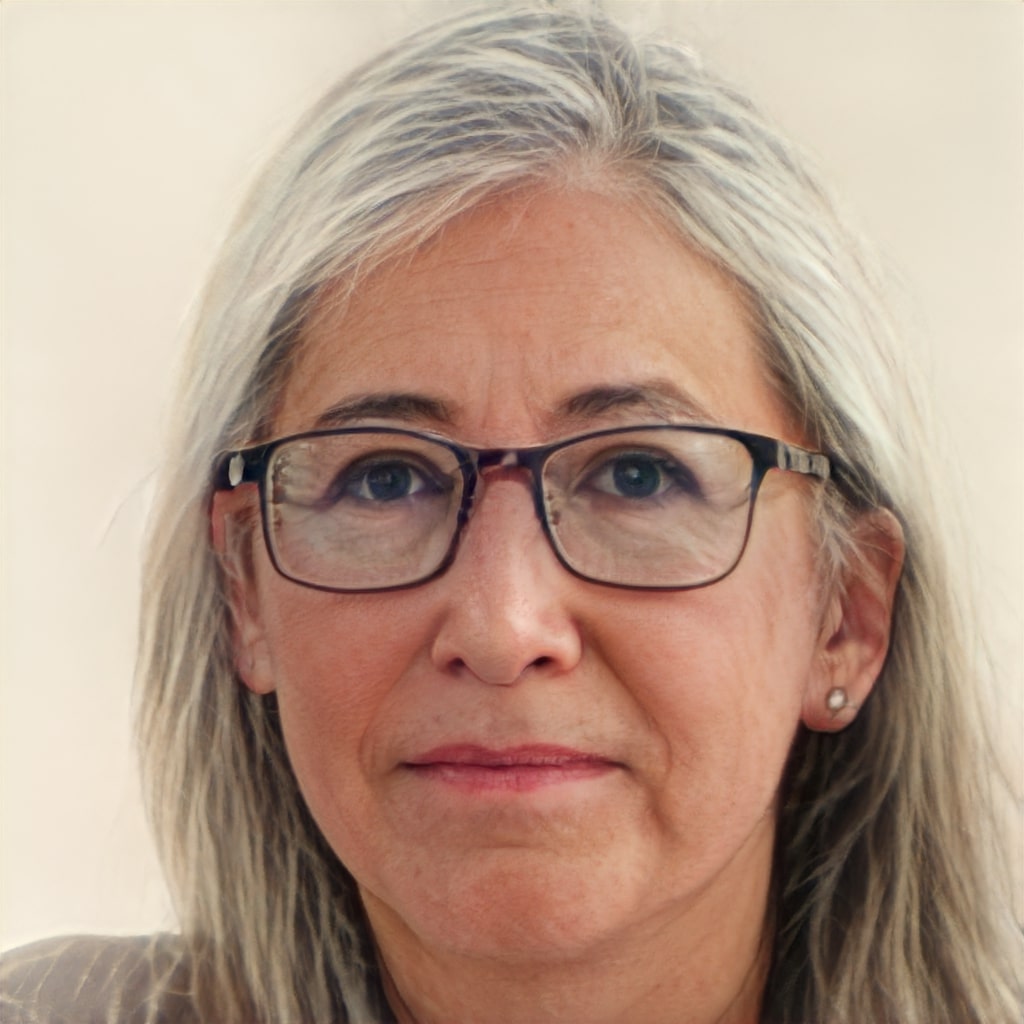}  
    \subcaption{}
\end{minipage}
\caption{Qualitative comparison among different compositional manipulation methods on real image inputs.
Target attributes from the top row to the bottom row are 
(1) glasses, smile, 55 y/o, 
(2) no glasses, smile, 28 y/o,
(3) glasses, white hair, right face.
Each column from left to right are
(a) original images,
(b) LACE-ODE \cite{nie2021controllable}, 
(c) linear latent classifier guidance in $\mathcal{W}_\text{s}$, 
(d) linear latent classifier guidance in $\mathcal{W}_\text{s}$+.}
\label{fig:stylegan2-real-edit}
\end{figure}


\section{Discussion}

\subsection{Why is the linear method competitive?}

\begin{figure}
    \centering
    \includegraphics[width=.9\linewidth]{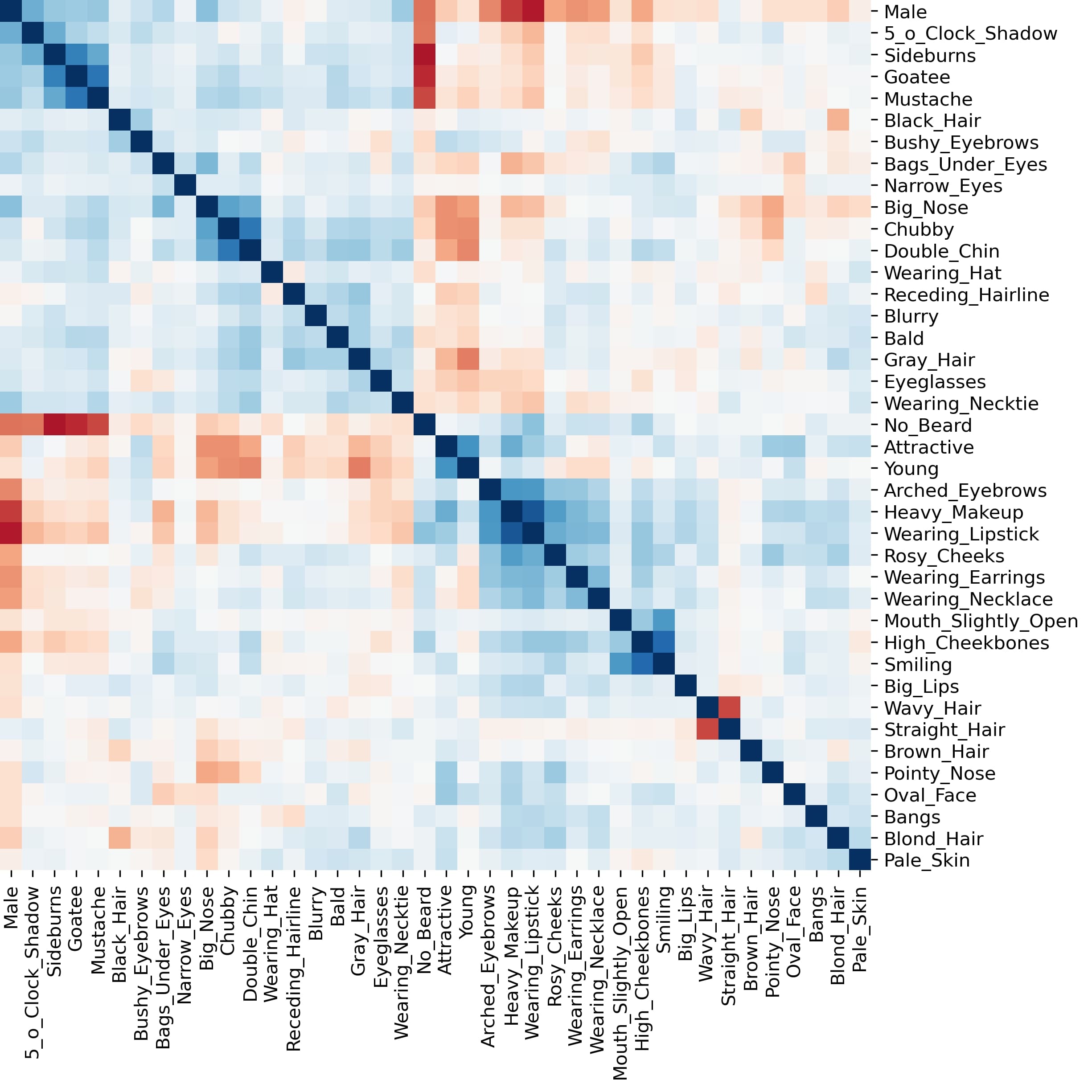}
    \caption{Visualization of Latent semantic correlations of Diffusion Autoencoder.
    Bluer regions indicate positive correlations and redder areas suggest negative correlations.}
    \label{fig:diffae-cls-corr}
\end{figure}

One main challenge of manipulating multiple attributes is maintaining the non-targetted attributes.
For compositional generation, these are the out-of-scope attributes; for sequential editing, these also include the previously manipulated attributes.
To tackle the challenge, previous methods either design a protection with linear manipulation such as InterfaceGAN \cite{shen2020interpreting}, or use non-linear manipulation such as StyleFlow and LACE.
As we have shown in previous sections, our LCG-linear can be very competitive against other non-linear methods despite its simplicity.

To understand its power, we examine the linear classifiers that are learned for manipulation.
Here we use the heatmap to visualize the correlation between each pair of linear classifiers learned on the semantic latent space of Diffusion Autoencoder, as shown in Fig.~\ref{fig:diffae-cls-corr}.
As we can see, the semantic latent space of Diffusion Autoencoder is favorably disentangled and thus the linear classifiers show strong orthogonality frequently.
This means that even without a specific protection mechanism as in \cite{shen2020interpreting}, LCG-linear is still capable of preserving the identity in most cases.
In Table~\ref{tab:stylegan-attr-changes}, we list the changes of conditional accuracy of other attributes when a single attribute is linearly manipulated.
The four edits correspond to the four attributes ``yaw'', ``smile'', ``age'', and ``glasses'' respectively.
The small changes indicate that the attributes are well disentangled in such generative models, and further explain the efficacy of LCG-linear.

\begin{table}[h]
    \centering
    \caption{Changes of conditional accuracy for each linear sequential editing.}
    \label{tab:stylegan-attr-changes}
    \resizebox{0.8\linewidth}{!}{
    \begin{tabular}[t]{c|cccc}
    \hline
    & yaw & smile & age  & glasses \\
    \hline
    Edit-1  & ---    & -0.001 & +0.001 & -0.002 \\
    Edit-2  & +0.003 & ---    & +0.001 & +0.007 \\
    Edit-3  & +0.001 & +0.000 & ---    & -0.013 \\
    Edit-4  & +0.000 & -0.012 & -0.005 & ---    \\
    \hline
    \end{tabular}
    }
\end{table}

\subsection{When is the non-linear method preferred?}

Despite the complicity of the non-linear diffusion based method, it does not always perform favorably against the linear version.
The main motivation for non-linear methods, as argued in \cite{abdal2021styleflow}, is that linear manipulation often moves a latent code outside the latent distribution which leads to low-quality generation. 
Indeed, in linear manipulation, we assume a non-informative $p(z_t)$ which favors different regions of sample space equally, regardless of the actual latent distribution.
This indicates that non-linear control is likely to profit when the generation needs to traverse low density region or is simply out-of-distribution.

An example is sequential editing. 
Sequential editing is more prone to low density region, as the new edit is conditioned on previous edits that possibly have already guided the latent to low density regions.
To see this, imagine a 3-D Guassian distribution where each axis represents an attribute and we want to guide a sample point from $[-1,-1,-1]$ to $[1,1,1]$.
Compositional generation is analogous to a direct path $[-1,-1,-1] \rightarrow [1,1,1]$, while sequential editing is analogous to a path $[-1,-1,-1] \rightarrow [-1,-1,1] \rightarrow \rightarrow [-1,1,1] \rightarrow [1,1,1]$ that traverses more low density regions.
As we can in Table~\ref{tab:comparison-seq-edit}, LCG-diffusion outperforms LCG-linear on FID, generating more realistic images.
Characterizing the latent distribution with diffusion model in this case is favorable than a non-informative one, as the diffusion model always pulls the sample toward high density region thus preventing it going out-of-distribution.
A downside of this is that, as we can see from the ID score, keeping images realistic is at the cost of losing identity preservation.
Improving the identity preservation should serve as an interesting topic for future exploration.

\section{Conclusion and Future Work}

In conclusion, we study the efficacy of using latent diffusion models and latent classifier guidance for compositional visual generation and manipulation.
Specifically, we train latent diffusion models and auxiliary latent classifiers to facilitate non-linear navigation of latent representation generation for two pre-trained generative models, StyleGAN2 and Diffusion Autoencoder. 
We demonstrate such a paradigm is suitable for compositional visual tasks both theoretically and empirically. 
Our findings suggest that latent classifier guidance is a promising approach that deserves further research, even in the presence of other strong methods such as Stable Diffusion \cite{rombach2022high}.

In our future work, we plan to explore modeling more complicated relations between attributes and aim to achieve compositionality on more challenging datasets and tasks, such as text/class-conditioned video generation \cite{balaji2019conditional,ho2022video,ni2023conditional,singer2022make}.
We are also interested in the performance of latent classifier guidance in out-of-distribution settings.
In addition, we acknowledge that relying on a pretrained generative model with a semantic latent space may not be practical or feasible in some scenarios.
To address this potential problem, we will explore the possibility of reorganizing the latent space of the pretrained generative model to construct a semantic latent space in the post-pretraining stage. Also, the demand for generating unseen classes and unseen sub-concept of an existing class is of surged interest in the community \cite{kumari2022multi,nitzan2023domain,kumari2023ablating,ruiz2022dreambooth}. To tackle these new challenges for compositional generation, we plan to investigate the applicability of leveraging the continual learning and incremental learning techniques \cite{van2020brain,volpi2021continual,liang2022balancing,wang2022learning,wang2022dualprompt,wang2022sprompts,tong2023incremental} to extend the semantic latent space of the generative model accordingly.


{\small
\bibliographystyle{ieee_fullname}
\bibliography{cvprw2023/latex/egbib}
}

\end{document}